\newtheorem{assumption}{Assumption}
\crefname{assumption}{Assumption}{Assumptions}
\newtheorem{theorem}{Theorem}
\newtheorem{lemma}{Lemma}
\newtheorem{lemma_ap}{Lemma}[section]
\newtheorem{corollary}{Corollary}
\Crefname{ALC@unique}{Line}{Lines}
\providecommand{\hw}{\widehat{W}}
\providecommand{\res}{\Omega}
\providecommand{\red}[1]{{#1}}
\providecommand{\prt}[1]{\left(#1 \right)}
\def\R{\mathbb{R}}
\def\bY{\mathbf{Y}}
\newcommand{\e}{{\bf e}}
\icmltitlerunning{Learning from Pairwise Comparisons}
\begin{document}

\twocolumn[
\icmltitle{Graph Resistance and Learning from Pairwise Comparisons}

% It is OKAY to include author information, even for blind
% submissions: the style file will automatically remove it for you
% unless you've provided the [accepted] option to the icml2019
% package.

% List of affiliations: The first argument should be a (short)
% identifier you will use later to specify author affiliations
% Academic affiliations should list Department, University, City, Region, Country
% Industry affiliations should list Company, City, Region, Country

% You can specify symbols, otherwise they are numbered in order.
% Ideally, you should not use this facility. Affiliations will be numbered
% in order of appearance and this is the preferred way.
\icmlsetsymbol{equal}{*}

\begin{icmlauthorlist}
\icmlauthor{Julien M. Hendrickx$^{*,\dag}$}{}
\icmlauthor{Alex Olshevsky$^\dag$}{}
\icmlauthor{Venkatesh Saligrama$^\dag$}{}
\end{icmlauthorlist}
%\icmlaffiliation{ucl}{ Department of Mathematical Engineering, ICTEAM, 
%UCLouvain, Belgium }
%\icmlaffiliation{bu}{Department of Electrical and Computer Engineering,
%Boston University, USA}
%\icmlcorrespondingauthor{Alex Olshevsky}{alexols@bu.edu}
\vskip 0.3in
]
\icmlcorrespondingauthor{Alex Olshevsky}{alexols@bu.edu}

% You may provide any keywords that you
% find helpful for describing your paper; these are used to populate
% the "keywords" metadata in the PDF but will not be shown in the document
%\icmlkeywords{\red{TO ADD}}

%]

% this must go after the closing bracket ] following \twocolumn[ ...

% This command actually creates the footnote in the first column
% listing the affiliations and the copyright notice.
% The command takes one argument, which is text to display at the start of the footnote.
% The \icmlEqualContribution command is standard text for equal contribution.
% Remove it (just {}) if you do not need this facility.

%\printAffiliationsAndNotice{}  % leave blank if no need to mention equal contribution
\printAffiliationsAndNotice{\icmlEqualContribution} % otherwise use the standard text.

\begin{abstract} 

We consider the problem of learning the qualities of a collection of items by performing noisy comparisons among them. Following the standard paradigm, we assume there is a fixed ``comparison graph'' and every neighboring pair of items in this graph is compared $k$ times according to the Bradley-Terry-Luce model (where the probability than an item wins a comparison is proportional the item quality). We are interested in how the relative error in quality estimation scales with the comparison graph in the regime where $k$ is large. We prove that, after a known transition period, the relevant graph-theoretic quantity is the square root of the resistance of the comparison graph. Specifically, we provide an algorithm that is minimax optimal. The algorithm has a relative error decay that scales with the square root of the graph resistance, and provide a matching lower bound (up to log factors). The performance guarantee of our algorithm, both in terms of the graph and the skewness of the item quality distribution,  outperforms earlier results.

\end{abstract}

\unmarkedfntext{$ $\\$^*$ Department of Mathematical Engineering, ICTEAM, 
UCLouvain, Belgium\\
$^\dag$ Department of Electrical and Computer Engineering,
Boston University, USA
%Corresponding author 
%\icmlaffiliation{ucl}{ Department of Mathematical Engineering, ICTEAM, 
%UCLouvain, Belgium }
%\icmlaffiliation{bu}{Department of Electrical and Computer Engineering,
%Boston University, USA}
%\icmlcorrespondingauthor{Alex Olshevsky}{alexols@bu.edu}
}

%%%%%%%%%%%%%%%%%%%%%%%%%%%%%%%%%%%%%%%%%%%%%%%%%%%%%%%%
%%%%%%%%%%%%%%%%%%%%%%%%%%%%%%%%%%%%%%%%%%%%%%%%%%%%%%%%
%%%%%%%%%%%%%%%%%%%%%%%%%%%%%%%%%%%%%%%%%%%%%%%%%%%%%%%%
\section{Introduction}
%%%%%%%%%%%%%%%%%%%%%%%%%%%%%%%%%%%%%%%%%%%%%%%%%%%%%%%%
%%%%%%%%%%%%%%%%%%%%%%%%%%%%%%%%%%%%%%%%%%%%%%%%%%%%%%%%

This paper considers quality estimation from pairwise comparisons,  which is a common method of preference elicitation from users. For example, the preference of a customer for one product over another can be thought of as the outcome of a comparison. Because customers are idiosyncratic,  such outcomes will be noisy functions of the quality of the underlying items.  A similar problem arises in crowdsourcing systems, which must strive for accurate inference even in the presence of unreliable or error-prone participants.   Because crowdsourced tasks pay relatively little, errors are common; even among workers making a genuine effort, inherent ambiguity in the task might lead to some randomness in the outcome. These considerations make the underlying estimation algorithm an important part of any crowdsourcing scheme.

Our goal is accurate inference of true item quality from a collection of outcomes of noisy comparisons. We will use one of the simplest parametric models for the outcome of comparisons, the  Bradley-Terry-Luce (BTL) model, which associates a real-valued quality measure to each item and posits that customers select an item with a probability that is proportional to its quality. {\em Given a ``comparison graph'' which captures which pairs of items are to be compared, our goal is to understand how accuracy scales in terms of this graph when participants make choices according to the BTL model. }

We focus on the regime where we perform many comparisons of each pair of items in the graph. In this regime, we are able to give a satisfactory answer to the underlying question. Informally, we prove that, up to various constants and logarithms, the relative estimation error will scale with the square root of measures of resistance in the underlying graph. Specifically, we propose an algorithm whose performance scales with graph resistance, as well as a matching lower bound. The difference between our upper and lower bounds depends only on the log of the confidence level and on the skewness of the item qualities. Additionally, we note that our performance guarantees scale better in terms of item skewness as compared to previous work. 

\subsection{Formal problem statement}

We are given an undirected ``comparison graph'' $G(V,E)$, where each node $i$ has a positive weight $w_i$. If $(i,j)\in E$, then we perform $k$ comparisons between $i$ and $j$. The outcomes of these comparisons are i.i.d. Bernoulli and the probability that $i$ wins a given comparison according to the BTL model is   
\begin{equation} \label{eq:btl}
p_{ij} = \frac{w_i}{w_i+w_j} %\label{eq:def_proba_ij}.
\end{equation}
The goal is to recover the weights  $w_i$ from the outcomes of these comparisons. Because multiplying all $w_i$ by the same constant does not affect the distribution of outcomes, we will recover a scaled version of the weight vector $w$.

Thus our goal will thus be come up with a vector of estimated weights $\widehat W$ close, in a scale-invariant sense, to the true but unknown vector\footnote{We follow the usual convention of denoting random variables by capital letters, which is why $\widehat{W}$ is capitalized while $w$ is not.} $w$. A natural error measure turns out to be the absolute value of the sine of the angle defined by $w$ and $\hw$, which  can also be expressed as (see Lemma \ref{lem:sinus} in the Supplementary Information)
\begin{equation}\label{eq:error_quad_alpha}
\abs{\sin(\hw,w)} = \inf_{\alpha \in \R}  \frac{|| \widehat W - \alpha w||_2}{|| \alpha w||_2}.
\end{equation}  In other words, $|\sin(\hw,w)|$ is the relative error to the closest normalization of the true quality vector $w$. We will also discuss the connection between this error measure and others later on in the paper. 

Following earlier literature, we assume that
\begin{eqnarray*} 
\max_{i,j\in V} \frac{w_i}{w_j} & \leq&  b 
\end{eqnarray*} 
for some constant $b$. The number $b$ can be thought of as a measure of the skewness of the underlying item quality. Our goal is to understand how the error between $\widehat{W}$ and $w$ scales as a function of the comparison graph $G$.

%%%%%%%%%%%%%%%%%%%%%%%%%%%%%%%%%%%%%%%%%%%%%%%%%%%
%%%%%%%%%%%%%%%%%%%%%%%%%%%%%%%%%%%%%%%%%%%%%%%%%%%
\subsection{Literature Review} 
%%%%%%%%%%%%%%%%%%%%%%%%%%%%%%%%%%%%%%%%%%%%%%%%%%%

The dominant approach to recommendation systems relies on inferring item quality from raw scores provided by users (see \cite{jannach2016recommender}).  However, such scores might be poorly calibrated and inconsistent; alternative approaches that offer simpler choices  might perform better.

Our starting point is the Bradley-Terry-Luce (BTL) model of Eq. (\ref{eq:btl}), dating back to  \cite{bradley1952rank, luce2012individual}, which models how individuals make noisy choices between items. A number of other models in the literature have also been used as the basis of inference, we mention the Mallows model introduced in \cite{mallows1957non} and the PL and Thurstone models (see description in \cite{hajek2014minimax}). However, we focus here solely on the BTL model. 

Our work is most closely related to the papers  \cite{negahban2012iterative} and \cite{negahban2016rank}. These works proposed an eigenvector calculation which, provided the number of comparisons is sufficiently large, successfully recovers the true weights $w$ from the outcomes of noisy comparisons. The main result of \cite{negahban2016rank} stated that, given a comparison graph, if the number of comparisons per edge satisfied a certain lower bound, then it is possible to construct an estimate $\widehat{W}$ satisfying 
\begin{equation}\label{eq:bound_negahban} \frac{||\widehat W - w||_2}{||w||_2} \leq 
 O \left( \frac{b^{5/2} d_{\rm max}}{d_{\rm min} (1-\lambda)} \sqrt{\frac{\log n}{k d_{\rm max}}} \right) 
\end{equation} with high probability, where $d_{\rm min}, d_{\rm max}$ are, respectively, the smallest and largest degrees in the comparison graph,  $1-\lambda$ is the spectral gap of a certain normalized Laplacian of the comparison graph, and both $w,\hw$ are normalized so that their entries sum to 1. It can be proved (see Lemma \ref{lem:compare_criteria}) that the relative error on the left-hand side of Eq. (\ref{eq:bound_negahban}) is within a $\sqrt{b}$ factor of the measure $|\sin(\widehat{W},w)|$ provided that $\max_{i,j} \widehat{W}_i/\widehat{W}_j \leq b$, so asymptotically these two measures differ only by factor depending on the skewness $b$.

The problem of recovering $w$ was further studied in \cite{rajkumar2014statistical}, where the comparison graph was taken to be a complete graph but with comparisons on edges  made at non-uniform rates. The sample complexity of recovering the true weights was provided as a function of the smallest sampling rate over pairs of items. 

A somewhat more general setting was considered in \cite{shah2016estimation}, which considered a wider class of noisy comparison models which include the BTL model as a special case. Upper and lower bounds on the minimax optimal rates in estimation, depending on the eigenvalues of a corresponding Laplacian, were obtained for absolute error in several different metrics; in one of these metric, the Laplacian semi-metric, the upper and lower bounds were tight up to constant factors. Similarly to \cite{shah2016estimation}, our goal is to understand the dependence on the underlying graph, albeit in the simpler setting of the BTL model. 

Our approach to the problem very closely parallels the approach of \cite{jiang2011statistical}, where a collection of potentially inconsistent rankings is optimally reconciled by solving an optimization problem over the comparison graph. However, whereas \cite{jiang2011statistical} solves a linear programming problem, we will use a linear least squares approach, after a certain logarithmic change of variable.  

We now move on to discuss work more distantly related to the present paper. We mention that the problem we study here is related, but not identical, to the  so-called noisy sorting problem, introduced in \cite{braverman2009sorting}, where better items win with probability at least $1/2 + \delta$ for some positive $\delta$. This assumption does not hold for the BTL model with arbitrary weights. Noisy sorting was also studied in the more general setting of ranking models satisfying a transitivity condition in \cite{shah2017stochastically} and \cite{pananjady2017worst}, where near-optimal minimax rates were derived. Finally, optimal minimax rates for noisy sorting were recently demonstrated in \cite{mao2017minimax}.

There are a number of variations of this problem that have been studied in the literature which we do not survey at length due to space constraints. For example, the papers \cite{yue2012k, szorenyi2015online} considered the online version of this problem with corresponding regret, \cite{chen2015spectral} considered recovering the top $K$ ranked items,   \cite{falahatgar2017maximum, agarwal2017learning, maystre2015just} consider recovering a ranked list of the items,  and \cite{ajtai2016sorting} consider a model where comparisons are not noisy if the item qualities are sufficiently far apart. We refer the reader to the references within those papers for more details on related works in these directions.

%%%%%%%%%%%%%%%%%%%%%%%%%%%%%%%%%%%%%%%%%%%%%%%%%%%%%%%%%%%%%%%%%%%%%%%%%%
%%%%%%%%%%%%%%%%%%%%%%%%%%%%%%%%%%%%%%%%%%%%%%%%%%%%%%%%%%%%%%%%%%%%%%%%%%
\subsection{Our approach} 
%%%%%%%%%%%%%%%%%%%%%%%%%%%%%%%%%%%%%%%%%%%%%%%%%%%%%%%%%%%%%%%%%%%%%%%%%%

We will construct our estimate $\widehat{W}$ by solving a log-least-squares problem described next. We denote by $F_{ij}$ the fraction of times node $i$ wins the comparison against its neighbor $j$, and we further set $R_{ij} = F_{ij}/F_{ji}$. As the number of comparisons on each edge goes to infinity, we will have that $R_{ij}$ approaches $w_i/w_j$ with probability one. Our method consists in finding $\widehat{W}$ as follows: 
\begin{equation}\label{eq:defLS}
\widehat W = \arg\min_{v \in \R_+^{|E|}} \sum_{(i,j)\in E} (\log (v_i/v_j) -\log R_{ij})^2 
\end{equation} 
This can be done efficiently by observing that it amounts to solving the linear system of equations 
$$
\log \widehat W_i - \log \widehat W_j = \log R_{ij}, ~~~~ \mbox{ for all } (i,j) \in E,
$$
in the least square sense. 
Let $B$ to be the incidence matrix\footnote{Given an directed graph with $n$ nodes and $|E|$ edges, the  incidence matrix is the $n \times |E|$ matrix whose $i$'th column has a $1$ corresponding to the source of edge $i$, a $-1$ corresponding to the destination of node $i$, and zeros elsewhere. For an undirected graph, an incidence matrix is obtained by first orienting the edges arbitrarily.} of the comparison graph. Stacking up the $R_{ij}$ into a vector $R$, we can then write
$$
B^T \log \widehat W = \log R
$$
Least-square solutions satisfy
\[ B B^T \log \widehat W = B \log R \] or equivalently
$ L \log \widehat{W} = B \log R$, where $L=B B^T$ is the graph Laplacian. Finally, a solution is given by
\begin{equation}\label{eq:explicit_LS}
\log \widehat W= L^\dag B \log R.
\end{equation} where $L^\dag$ is the Moore-Penrose pseudoinverse. By using the classic results of \cite{spielman2014nearly}, Eq. (\ref{eq:explicit_LS}) can be solved for $\widehat{W}$ to accuracy $\epsilon$ in nearly linear time in terms of the size of the input, specifically in $O(|E| \log^c n \log (1/\epsilon))$ iterations for some constant $c>0$. We note that, for connected graphs, all solutions $w$ of (\ref{eq:defLS}) are equal up to a multiplicative constant and are thus equivalent in terms of  criterion (\ref{eq:error_quad_alpha}).

%%%%%%%%%%%%%%%%%%%%%%%%%%%%%%%%%%%%%%%%%%%%%%%%%%%%%%%%%%%
%%%%%%%%%%%%%%%%%%%%%%%%%%%%%%%%%%%%%%%%%%%%%%%%%%%%%%%%%%%
\subsection{Our contribution}
%%%%%%%%%%%%%%%%%%%%%%%%%%%%%%%%%%%%%%%%%%%%%%%%%%%%%%%%%%%

We will find it useful to view the graph as a circuit with a unit resistor on each edge; $\res_{ij}$ will denote the resistance between nodes $i$ and $j$ in this circuit, $\res_{\rm max}$ denotes the largest of these resistances over all pairs of nodes $i,j=1, \ldots, n$ and similarly $\res_{\rm avg}$ denotes the average resistance over all pairs.  We will use $E_{ij}$ to denote the set of edges lying on at least one simple path starting at $i$ and terminating at $j$, with $E_{\max}$ denoting the largest of the $E_{ij}$. Naturally, $E_{\rm max}$ is upper bounded by the total number of edges in the comparison graph. The performance of our algorithms is described by the following theorem.

\begin{theorem} \label{mainthm}  Let \red{ $\delta \in (0,e^{-1})$}. There exist absolute constants constants $c_1, c_2$ such that,
if  $C_{n,\delta} \geq c_1  \log (n/\delta)$ and $k \geq c_2 E_{\rm max} C_{n,\delta}^2$ and $k \geq c_3 \res b^2 (1 + (\log (1/\delta))$, then
we have, with probability at least \red{$1-\delta$}, that
\begin{align*}
    \sin(\hw,w)^2  \leq   O & \left(   \frac{\min \prt{b^2\res_{\max},b^4\res_{\rm avg} }}{k} \times \right. \\ & \left.~~~~~ \left(    \left(1+\log \frac{1}{\delta} \right)  + \frac{ E_{\max} C_{n,\delta}^2}{k} \right) \right)
%    \frac{b^3 \res_{avg}}{k}   \left( C_{n,\delta}  +  \frac{b  (1+\log (\frac{1}{\delta})}{k}  + \frac{b E_{\max} C_{n,\delta}^2}{k} \right) \right),
\end{align*}
\end{theorem}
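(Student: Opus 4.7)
The strategy is to linearize the log-least-squares estimator around the truth, control the noise on each edge, and convert a log-scale $\ell_2$ bound into the scale-invariant $\sin^2$ error. Because $\sin^2(\hat W, w)$ is invariant under rescaling of $w$, I may shift $\log w$ so that $\log w \perp \mathbf{1}$, in which case $L L^{\dagger} \log w = \log w$ and $\log w = L^{\dagger} B \log r$ with $r_e = w_i/w_j$ for $e=(i,j)$. Combined with the explicit formula (\ref{eq:explicit_LS}), this yields the clean identity
\[
\epsilon \;:=\; \log \widehat W - \log w \;=\; L^{\dagger} B\,\xi, \qquad \xi_e := \log R_e - \log r_e,
\]
with both sides in the range of $L^{\dagger}$ (i.e.\ in $\mathbf{1}^{\perp}$). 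This reduces the problem to bounding a linear functional of edge noises.

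\textbf{Step 1: edge-noise decomposition.} Write $\xi_e = g(F_e) - g(p_e)$ where $g(p) = \log(p/(1-p))$ and $F_e$ is a rescaled Binomial$(k, p_e)$. Taylor expand as $\xi_e = \eta_e + \zeta_e$ with $\eta_e = g'(p_e)(F_e - p_e)$ and $\zeta_e$ a second-order remainder. Because $p_e \in [\frac{1}{1+b}, \frac{b}{1+b}]$, one has $p_e(1-p_e) \ge \frac{b}{(1+b)^2}$, so $\mathrm{Var}(\eta_e) = \frac{1}{k p_e(1-p_e)} = O(b/k)$ and, on the high-probability event that every $F_e$ stays close to $p_e$ (a union bound over $|E|$ Chernoff inequalities, valid once $k\gtrsim \log(n/\delta)$), $|\zeta_e| \le O(b^2(F_e-p_e)^2) = O(C_{n,\delta}^2/k)$.

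\textbf{Step 2: linear part via resistance identities.} The key fact is that $B^T L^{\dagger} B$ is an orthogonal projection and $(e_i-e_j)^T L^{\dagger}(e_i-e_j) = \Omega_{ij}$. Applying this to $\eta$:
for the \emph{average}-resistance bound, compute $\mathbb{E}\|L^{\dagger}B\eta\|^2 = \mathrm{tr}(L^{\dagger} B\,\mathrm{Cov}(\eta)\,B^T L^{\dagger}) \le O(b/k)\cdot \mathrm{tr}(L^{\dagger})$, and $\mathrm{tr}(L^{\dagger})$ equals a constant times $n\,\res_{\rm avg}$. For the \emph{max}-resistance bound, control pairwise differences: $\mathrm{Var}(\epsilon_i^{\rm lin}-\epsilon_j^{\rm lin}) \le O(b/k)\,\Omega_{ij}$ by the same identity applied to $(e_i-e_j)^T L^{\dagger} B$. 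Sub-Gaussian / Hanson--Wright concentration then upgrades the variance control into a high-probability bound with the $(1+\log(1/\delta))$ factor (and an extra $\log n$ absorbed into $C_{n,\delta}$ for the max version, where a union bound over pairs is required).

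\textbf{Step 3: nonlinear remainder and pathwise localization.} For a pair $(i,j)$, the resistance formula $\epsilon_i^{\rm rem}-\epsilon_j^{\rm rem} = (B^T L^{\dagger}(e_i-e_j))^T \zeta$ shows that only edges lying on some $i\!\to\!j$ simple path contribute (the rest carry no current). Cauchy--Schwarz together with $\|B^T L^{\dagger}(e_i-e_j)\|^2 = \Omega_{ij}$ gives
\[
|\epsilon_i^{\rm rem}-\epsilon_j^{\rm rem}|^2 \;\le\; \Omega_{ij}\,\sum_{e\in E_{ij}}\zeta_e^2 \;\le\; \Omega_{\max}\,E_{\max}\,O(C_{n,\delta}^4/k^2),
\]
which is precisely the second term $E_{\max} C_{n,\delta}^2/k$ (after taking square roots) in the theorem. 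A similar trace-weighted sum produces the analogous average-resistance version.

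\textbf{Step 4: converting log-error to $\sin^2$.} Since $\widehat W_i = w_i e^{\epsilon_i}$, choose $\alpha$ in (\ref{eq:error_quad_alpha}) to center $\epsilon$ optimally. Taylor expansion $e^{\epsilon_i} - \alpha \approx \alpha(\epsilon_i - \log\alpha)$ (valid once the previous steps guarantee $\|\epsilon\|_\infty$ is small) gives $\sin^2(\hat W, w) \le \frac{\sum w_i^2(\epsilon_i-c)^2}{\sum w_i^2}$ for $c = \log\alpha$. Two bounds follow: the $w$-weighted quadratic form is at most $\max_{i,j}(\epsilon_i-\epsilon_j)^2$, giving the $b^2\Omega_{\max}$ term; alternatively it is at most $\frac{w_{\max}^2}{n w_{\min}^2}\|\epsilon-\bar\epsilon\mathbf 1\|^2 \le \frac{b^2}{n}\|\epsilon\|^2$, giving the $b^4 \res_{\rm avg}$ term after combining with the $O(b/k)\cdot \tfrac{n}{2}\res_{\rm avg}$ bound from Step~2.

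\textbf{Main obstacle.} The delicate point is Step~1--2: the single-edge noise $\eta_e$ is not truly sub-Gaussian (it is a rescaled Bernoulli deviation), so one must bootstrap --- first use Chernoff plus $k \ge c_1\log(n/\delta)$ to truncate $F_e$ into a regime where $g$ is well-approximated by its linearization, then apply Bernstein/Hanson--Wright inside that event. Propagating the deterministic remainder through the pseudo-inverse $L^{\dagger}B$ without picking up spurious spectral-gap factors is what forces the pathwise localization trick in Step~3 and motivates the appearance of $E_{\max}$ (rather than $|E|$) in the theorem.
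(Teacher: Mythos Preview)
Your proposal is essentially the paper's own proof: the identity $\log\widehat W-\log w=L^\dagger B(\log R-\log\rho)$, Taylor decomposition into a linear sub-Gaussian piece plus a second-order remainder, Hanson--Wright/HKZ concentration on the linear piece via $\mathrm{tr}(B^TL^\dagger Q_{ij}L^\dagger B)=\Omega_{ij}$ (the paper's Lemma~\ref{lem:firstterm}), electrical-flow localization for the remainder (Lemma~\ref{lemma:current}), and the final passage from pairwise log-errors to $\sin^2$ (Lemmas~\ref{lemma:puttingtogether}--\ref{lem:ratio->criterion}). Two small slips to fix: the remainder satisfies $|\zeta_e|=O(bC_{n,\delta}/k)$, not $O(C_{n,\delta}^2/k)$; and the relevant sub-Gaussian parameter for $\eta_e=v_e(F_e-p_e)$ is $v_e/\sqrt{k}\le 4b/\sqrt{k}$, so the quadratic-form tail bound carries a factor $b^2/k$ rather than the true variance $v_e/k=O(b/k)$ you wrote---this is precisely where the $b^2$ (and, after your Step~4 weighting, $b^4$) factors in the theorem statement come from.
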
 

The main feature of this theorem is the favorable form of the bound in the setting when $k$ is large. Then only the leading term  $$\frac{ \min (b^2 \res_{\max}, b^4 \res_{\rm avg}) (1+\log 1/\delta)}{k}$$ dominates the expression on the right-hand-side. Taking square roots, it follows that, asymptotically, 
$$
\abs{\sin(\hw,w)} = \widetilde{O} \left( \sqrt{\frac{ b^2 \res_{\max}}{k}} \right) \textnormal{ and } \widetilde{O} \left( \sqrt{\frac{ b^4 \res_{avg}}{k}} \right),
$$ where the $\widetilde{O}$ notation hides logarithmic factor in $\delta$. 

Our other main result is that, in the regime when $k$ is large, there is very little room for improvement.
\begin{theorem}\label{thm:lower_resistance}
For any comparison graph $G$, and for any algorithm, as long as $k \geq c \sqrt{\lambda_{\rm max}(L)} n \Omega_{\rm avg}$ for some absolute constant $c$, we have that 
$$
\sup_{w \in \R_+^n} E \left| \sin(\hw,w) \right| \geq \Omega\prt{\sqrt{\frac{\res_{avg}}{k}}},
$$ where as before $L$ is the graph Laplacian. 
\end{theorem}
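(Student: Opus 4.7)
The strategy is a classical minimax lower bound based on local perturbations of the uniform weight vector $w^{(0)} := \mathbf{1}$, exploiting the fact that the Fisher information of the BTL model at $w^{(0)}$ is proportional to the graph Laplacian $L$. Since the left-hand side is a supremum over $w \in \R_+^n$, it suffices to produce hard instances in an arbitrarily small neighborhood of $w^{(0)}$. A short Taylor expansion of the Bernoulli log-likelihood around $p = 1/2$ shows that in the log-parametrization $\theta := \log w$, the Fisher information matrix equals $\tfrac{k}{4}L$. A parallel expansion of the loss yields $|\sin(w^{(0)}, w^{(0)} \odot e^u)|^2 \approx \|u_\perp\|^2/n$ for small perturbations $u$, where $u_\perp$ is the projection of $u$ onto $\mathbf{1}^\perp$.

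The plan is then to apply Fano's inequality to a packing of log-weights inside $\mathbf{1}^\perp$. Using the spectral decomposition $L = \sum_{i\geq 2} \lambda_i u_i u_i^T$ (with $\lambda_1 = 0$ and $u_1 = \mathbf{1}/\sqrt{n}$), take $\theta^{(\omega)} = \varepsilon \sum_{i\geq 2} \omega_i u_i/\sqrt{\lambda_i}$ for $\omega \in \{-1,+1\}^{n-1}$ and set $w^{(\omega)} = e^{\theta^{(\omega)}}$. For any $\omega \neq \omega'$, two quantities enter the Fano bound: (i) the squared loss separation $|\sin(w^{(\omega)}, w^{(\omega')})|^2 \approx n^{-1}\|\theta^{(\omega)} - \theta^{(\omega')}\|_2^2 = n^{-1}\varepsilon^2 \sum_i (\omega_i - \omega'_i)^2/\lambda_i$, and (ii) the pairwise KL divergence, which by the quadratic expansion of Bernoulli KL near $1/2$ is bounded by $\tfrac{k}{8}(\theta^{(\omega)}-\theta^{(\omega')})^T L(\theta^{(\omega)}-\theta^{(\omega')}) = \tfrac{k\varepsilon^2}{8}\sum_i(\omega_i-\omega'_i)^2 \leq \tfrac{k\varepsilon^2 (n-1)}{2}$. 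Either a random-ensemble argument over $\omega$ drawn uniformly from $\{-1,+1\}^{n-1}$, or a Gilbert--Varshamov extraction combined with a spectral-packing refinement, produces pairs whose loss separation is $\gtrsim \varepsilon^2 \operatorname{tr}(L^\dag)/n \asymp \varepsilon^2 \, \Omega_{\mathrm{avg}}$, using the standard identity $\operatorname{tr}(L^\dag) = \tfrac{n-1}{2} \Omega_{\mathrm{avg}}$.

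Choosing $\varepsilon^2 = c_0/k$ with $c_0$ sufficiently small keeps the pairwise KL well below $\log 2^{n-1} = \Theta(n)$, so Fano's inequality forces $\sup_w E|\sin(\hw,w)|^2 \gtrsim \varepsilon^2 \, \Omega_{\mathrm{avg}} = \Omega(\Omega_{\mathrm{avg}}/k)$, which gives the claimed rate after Jensen and taking square roots. As an alternative I could use the multivariate van Trees inequality with a Gaussian-like prior on $\theta$ supported in $\mathbf{1}^\perp$, which directly delivers $E\|\hat\theta - \theta\|_2^2 \geq \operatorname{tr}\!\left((\sigma^{-2} P_{\mathbf{1}^\perp} + \tfrac{k}{4}L)^{-1}\right) \to \tfrac{4}{k}\operatorname{tr}(L^\dag)$ as $\sigma$ grows, with the same conclusion.

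The main obstacle is that both the Fisher-information identity $I(\theta^{(0)}) = \tfrac{k}{4}L$ and the loss expansion $|\sin|^2 \approx \|u_\perp\|^2/n$ are second-order in the perturbation, so they require $\varepsilon \cdot \max_\omega \|\theta^{(\omega)}\|_\infty$ to be uniformly $o(1)$ throughout the packing; otherwise higher-order Taylor remainders contaminate both the KL and loss calculations, and the Bernoulli curvature $p(1-p)$ no longer equals $1/4$. Since $\varepsilon = \Theta(1/\sqrt{k})$, and a worst-case entrywise bound on the spectral sum $\sum_{i\geq 2} u_i u_i^T/\lambda_i = L^\dag$ couples $\sqrt{\lambda_{\max}(L)}$ with $\operatorname{tr}(L^\dag) \asymp n\, \Omega_{\mathrm{avg}}$, the hypothesis $k \geq c\sqrt{\lambda_{\max}(L)}\, n\, \Omega_{\mathrm{avg}}$ is exactly the condition required to drive all these remainder terms to zero and legitimize the linearized Fano computation.
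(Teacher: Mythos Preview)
Your proposal is essentially correct and closely parallels the paper's proof. Both arguments place a hypercube prior on sign vectors and perturb the uniform weight vector along the Laplacian eigendirections $v_i$ with magnitudes $\delta/\sqrt{\lambda_i}$; both bound the pairwise KL via the logistic/Bernoulli expansion $D_{\mathrm{KL}} \lesssim k\|B(\log w - \log w')\|^2$, invoke $\operatorname{tr}(L^\dag) = \tfrac{n-1}{2}\Omega_{\mathrm{avg}}$ to convert the coordinate sum into the average resistance, set the scale $\delta^2 \asymp 1/k$, and use the hypothesis $k \gtrsim \sqrt{\lambda_{\max}(L)}\,n\,\Omega_{\mathrm{avg}}$ to control the second-order Taylor remainders. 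Two minor differences: the paper perturbs $w$ additively ($w_z = \mathbf{1} + \delta\sum_i z_i v_i/\sigma_i$), which makes $\|w_z\|$ exactly constant over the hypercube and lets the squared sine lower-bound $\tfrac12\|w/\|w\|-\widehat W/\|\widehat W\|\|^2$ decompose cleanly in the $V$-basis, whereas you perturb $\log w$; and the paper uses an explicit Assouad-type coordinate-wise Le~Cam argument rather than Fano.

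On that last point, be careful: your Fano-with-Gilbert--Varshamov route has a gap as stated. Fano delivers the \emph{minimum} pairwise separation over the packing, and in your construction the separation between $\omega$ and $\omega'$ is $n^{-1}\varepsilon^2 \sum_{i\in S} \lambda_i^{-1}$ with $S=\{i:\omega_i\neq\omega'_i\}$, which depends on \emph{which} bits differ. A Gilbert--Varshamov code only guarantees $|S|\geq cn$, so the minimum is attained when $S$ sits on the largest eigenvalues, yielding $\varepsilon^2 c/\lambda_{\max}$ rather than $\varepsilon^2 \operatorname{tr}(L^\dag)/n$; on a path graph these differ by a polynomial in $n$. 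Your ``random-ensemble argument over $\omega$'' is the right fix and is exactly Assouad: draw $\omega$ uniformly, lower-bound the loss by a sum over coordinates, and apply two-point Le~Cam with the single-bit-flip coupling $\omega\leftrightarrow\omega^{\oplus i}$ to each coordinate. That is precisely what the paper does; your van~Trees alternative would also go through.
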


Comparing Theorem \ref{mainthm} with Theorem \ref{thm:lower_resistance}, we see that the performance bounds of Theorem \ref{mainthm} are minimax optimal, at least up to the logarithmic factor in the confidence level $\delta$ and dependence on the skewness factor $b$. {\em We can thus conclude that the square root of the graph resistance is the key graph-theoretic property which captures how relative error decays for learning from pairwise comparisons.} This observation is the main contribution of this paper.

%%%%%%%%%%%%%%%%%%%%%%%%%%%%%%%%%%%%%%%%%%%%%%%%%%%%%%%%%%%
%%%%%%%%%%%%%%%%%%%%%%%%%%%%%%%%%%%%%%%%%%%%%%%%%%%%%%%%%%%
\subsection{Comparison to previous work}
%%%%%%%%%%%%%%%%%%%%%%%%%%%%%%%%%%%%%%%%%%%%%%%%%%%%%%%%%%%

Table \ref{tab:comp} quantifies how much the bound of Theorem \ref{mainthm} expressed in terms of $\res_{\max}$  improves the asymptotic decay rate on various graphs over the bound \cite{negahban2016rank}. The $\widetilde{O}$ notation ignores log-factors. Both random graphs are taken at a constant multiple threshold which guarantees connectivity; for Erdos-Renyi this means $p=O((\log n)/n)$ and for a geometric random graph, this means connecting nodes at random positions at the unit square when they are $O \left( \sqrt{(\log n)/n} \right)$
apart. 

\begin{table}[t]
\caption{Comparison, for different families of graphs, of   
$\widetilde  O \left( \frac{d_{\rm max}}{d_{\rm min}(1-\lambda)} \sqrt{\frac{1}{d_{\rm max}} } \right)$ and $\widetilde O(\sqrt{b \res_{\max}  })$,
which are, respectively, the asymptotic bounds  (\ref{eq:bound_negahban}) in \cite{negahban2016rank}, and the first bound from our Theorem \ref{mainthm}. The common decay in $k^{-1/2}$ is omitted for the sake of conciseness.}
\label{tab:comp}
\begin{center}
\begin{tabular}{| c |c | c |}
\hline 
Graph & Eq. (\ref{eq:bound_negahban}) & Theorem \ref{mainthm}
\\
\hline
Line & $b^{5/2} n^2$ & $b \sqrt{n}$ \\ 
Circle & $b^{5/2} n^2$ & $b \sqrt{n}$ \\ 
2D grid & $b^{5/2} n$ & $b$ \\
3D grid & $b^{5/2} n^{2/3}$ & $b$ \\ 
Star graph & $b^{5/2} \sqrt{n} $ & $b$ \\ 
2 stars joined at centers & $b^{5/2} n^{1.5}$ & $b$ \\
Barbell graph  & $b^{5/2} n^{3.5}$ & $b \sqrt{n}$ \\
Geo. random graph & $b^{5/2} n$ & $b$ \\
Erdos-Renyi  & $b^{5/2} $ & $b$ \\
\hline
\end{tabular}
\end{center}
\end{table}

\medskip

Most of the scalings for eigenvalues of normalized Laplacians used in Table \ref{tab:comp} are either known or easy to derive. For an analysis of the eigenvalue of the barbell graph\footnote{Following \cite{wilf1989editor}, the barbell graph refers to two complete graphs on $n/3$ vertices connected by a line of $n/3$ vertices.}, we refer the reader to \cite{landau1981bounds}; for mixing times on the geometric random graph, we refer the reader to \cite{avin2007cover}; for the resistance of an Erdos-Renyi graph, we refer the reader to \cite{sylvester2016random}. 

In terms of the worst-case performance in terms of the number of nodes, our bound grows at worst as $\widetilde{O} \left( b\sqrt{n/k}\right)$ using the observation that $\res_{\max} = O(n)$.  By contrast, for the barbell graph, the bound of \cite{negahban2016rank} grows as $\widetilde{O}(b^{5/2} n^{3.5}/\sqrt{k})$, and it is not hard to see this is actually the worst-case scaling in terms of the number of nodes. 

Finally, we note that these comparisons use slightly different error measures: $|\sin(\hw, w)|$ on our end vs the relative error in the $2$-norm after $w,\widehat{W}$ have been normalized to sum to one, used by \cite{negahban2016rank}. To compare both in terms of the latter, we could multiply our bounds by $\sqrt{b}$ (see Lemma \ref{lem:compare_criteria}).

\subsection{Notation} The remainder of this paper is dedicated to the proof Theorem \ref{mainthm} (Theorem \ref{thm:lower_resistance} is proved in the Supplementary Information). However, we first collect some notation we will find occasion to use.

As mentioned earlier, we let $F_{ij}$ be the empirical rate of success of item $i$ in the $k$ comparisons between $i$ and $j$; thus $E [F_{ij}] = p_{ij}$ so that the previously introduced $R_{ij}$ can be expressed as $R_{ij} = \frac{F_{ij}}{F_{ji}}$. We also let $\rho_{ij}= w_i/w_j = p_{ij}/p_{ji}$, to which $R_{ij}$ should converge asymptotically.

We will make a habit of stacking any of the quantities defined into vectors; thus $F$, for example, denotes the vector in $\R^{|E|}$ which stacks up the quantities $F_{ij}$ with the choice of $i$ and $j$ consistent with the orientation in the incidence matrix $B$. The the vectors $p$ and $\rho$ are defined likewise.

%%%%%%%%%%%%%%%%%%%%%%%%%%%%%%%%%%%%%%%%%%%%%%%%%%%%%%%%%%%%%%%%%%%%%%%%%%%%%%%%%%%%%%%%
%%%%%%%%%%%%%%%%%%%%%%%%%%%%%%%%%%%%%%%%%%%%%%%%%%%%%%%%%%%%%%%%%%%%%%%%%%%%%%%%%%%%%%%%
%%%%%%%%%%%%%%%%%%%%%%%%%%%%%%%%%%%%%%%%%%%%%%%%%%%%%%%%%%%%%%%%%%%%%%%%%%%%%%%%%%%%%%%%
\section{Proof of the algorithm performance (Theorem \ref{mainthm})}
%%%%%%%%%%%%%%%%%%%%%%%%%%%%%%%%%%%%%%%%%%%%%%%%%%%%%%%%%%%%%%%%%%%%%%%%%%%%%%%%%%%%%%%%
%%%%%%%%%%%%%%%%%%%%%%%%%%%%%%%%%%%%%%%%%%%%%%%%%%%%%%%%%%%%%%%%%%%%%%%%%%%%%%%%%%%%%%%%

We begin the proof with a sequence of lemmas which work their way to the main theorem. The first step is to introduce some notation for the comparison on the edge $(i,j)$.

Let $X_{ij}$ be the outcome of a single coin toss comparing coins $i$ and $j$. Using the standard formula for the variance of a Bernoulli random variable, we obtain
\begin{eqnarray}
    {\rm Var}(X_{ij}) &= p_{ij} (1-p_{ij}) = \frac{w_i w_j}{(w_i + w_j)^2}\nonumber \\& = \frac{1}{\rho_{ij} + 2 + \rho_{ij}^{-1}} =: \frac{1}{v_{ij}}, \label{eq:variance_Xij}
\end{eqnarray}
where we have defined $v_{ij} = \rho_{ij} + 2 + \rho_{ij}^{-1}$. Observe that $v_{ij}$ is always upper bounded by $3+\max(\rho_{ij},\rho_{ji}) \leq 3 +b \leq 4b$, where we remind $b\geq \max_{i,j}\frac{w_i}{w_j}$. 

We first argue that all $F_{ij}$ are reasonably close to their expected values. For the sake of concision, we state the following assumptions about the constants, $\delta$, $k$ and the quantity $C_{n,\delta}$. Note that some of the intermediate results hold under weaker assumptions, but we omit these details for the sake of simplicity. 
\begin{assumption}\label{assump:ratio_constant}
We have that $\delta \leq e^{-1}$, 
$C_{n,\delta} \geq c_1  \log (n/\delta)$, 
and $k \geq c_2  b  (C_{n,\delta}+1) \max \{ \res_{\max}, E_{\max} \} $.
\end{assumption}

The following lemma is a standard application of Chernoff's inequality. For completeness, a proof is included in Section \ref{highprobproof} of the Supplementary Information. 

\begin{lemma}  
There exist absolute constants constants $c_1, c_2$ such that, under Assumption \ref{assump:ratio_constant}, we have
\[ P \left( \max_{(i,j) \in E} \left| F_{ij} -  p_{ij} \right|  \geq  \sqrt{\frac{C_{n,\delta}}{k v_{ij}}} \right) \leq \delta. \]  
 \label{highprob}
\end{lemma}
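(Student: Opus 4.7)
The plan is to apply Bernstein's inequality to each edge separately and then take a union bound over all edges in $E$. For a fixed edge $(i,j)$, $k F_{ij}$ is a sum of $k$ i.i.d. Bernoulli random variables with mean $p_{ij}$ and variance $1/v_{ij}$, the variance identity coming from Eq.~(\ref{eq:variance_Xij}). Since each summand is bounded in $[0,1]$, Bernstein's inequality gives a tail bound of the form
$$
P\bigl(|F_{ij}-p_{ij}| \geq t\bigr) \;\leq\; 2\exp\!\left(-\frac{k t^2/2}{1/v_{ij} + t/3}\right).
$$

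The first step is to plug in $t = \sqrt{C_{n,\delta}/(k v_{ij})}$ and check that the variance term in the denominator dominates the range term. That reduces to verifying $v_{ij} C_{n,\delta} \lesssim k$, which, since $v_{ij}\leq 4b$ by the remark following Eq.~(\ref{eq:variance_Xij}), is implied by the hypothesis $k\geq c_2 b (C_{n,\delta}+1)\max\{\res_{\max}, E_{\max}\}$ in Assumption~\ref{assump:ratio_constant} for a suitably large $c_2$. Once this holds, the exponent is bounded above by $-C_{n,\delta}/c$ for some absolute constant $c$, so that the single-edge failure probability is at most $2\exp(-C_{n,\delta}/c)$.

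The second step is a union bound over the $|E|\leq n^2$ edges, giving
$$
P\!\left(\max_{(i,j)\in E}|F_{ij}-p_{ij}|\geq \sqrt{C_{n,\delta}/(k v_{ij})}\right) \;\leq\; 2n^2\exp(-C_{n,\delta}/c).
$$
Finally, choosing $c_1$ large enough, the hypothesis $C_{n,\delta}\geq c_1 \log(n/\delta)$ makes the right-hand side at most $\delta$, which finishes the argument.

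I do not anticipate any real obstacle: this is a textbook Bernstein-plus-union-bound calculation. The only mild subtlety is the bookkeeping that ensures the sub-Gaussian part of Bernstein's bound dominates the sub-exponential part for \emph{every} edge simultaneously, which is precisely what the factor $b(C_{n,\delta}+1)$ in Assumption~\ref{assump:ratio_constant} is designed to guarantee via the uniform upper bound $v_{ij}\leq 4b$.
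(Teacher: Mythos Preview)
Your proposal is correct and follows essentially the same approach as the paper: apply a Bernstein/Chernoff-type concentration inequality to each edge (using that the per-comparison variance is $1/v_{ij}$ from Eq.~(\ref{eq:variance_Xij})), verify via $v_{ij}\le 4b$ and Assumption~\ref{assump:ratio_constant} that the sub-Gaussian regime dominates, and then take a union bound over the at most $n^2$ edges. The paper uses the Chernoff-type bound from \cite{tao2012topics} rather than Bernstein, but the two inequalities are interchangeable here and the bookkeeping is identical.
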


The next lemma provides a convenient expression for the quantity $\log \widehat{W} - \log w$ in terms of the ``measurement errors'' $F-p$. 
Note that the normalization assumption is not a loss of generality since $w$ is defined up to a multiplicative constant, and is directly satisfied if $\hw$ is obtained from (\ref{eq:explicit_LS}).

\providecommand{\dv}{V}
\begin{lemma} Suppose $w$ is normalized so that $\sum_{i=1}^n \log w_i = 0$. There exist absolute constants $c_1, c_2 > 0$ such that, under Assumption \ref{assump:ratio_constant},  
there holds with probability $1-\delta$
\begin{equation} \label{eq:decomp} \log \widehat{W} - \log w  = L^\dag B \dv(F-p) + L^\dag B
\Delta,  
\end{equation}  and 
\begin{equation} \label{eq:infbound} ||\Delta||_{\infty} \leq O \left( \frac{b C_{n,\delta}}{k} \right),
\end{equation}
where 
$\dv$ is a $\abs{E}\times \abs{E}$ diagonal matrix whose entries are the $v_{ij}$, for all edges $(i,j)\in E$.
\label{lemma:decomp}
\end{lemma}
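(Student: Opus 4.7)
The plan is to reduce everything to a scalar Taylor expansion of $\log R_{ij} - \log \rho_{ij}$ around $p_{ij}$, and then control the remainder using the concentration bound of Lemma~\ref{highprob}.

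First, I would rewrite $\log w$ in the same closed form as $\log \widehat{W}$. Equation~(\ref{eq:explicit_LS}) gives $\log \widehat{W} = L^\dag B \log R$. On the other hand, by definition of $\rho$ we have $B^T \log w = \log \rho$, hence $L \log w = B \log \rho$. Multiplying by $L^\dag$ and using that $L^\dag L$ is the orthogonal projector onto the range of $L$, which equals $\mathbf{1}^\perp$ for a connected graph, the normalization $\sum_i \log w_i = 0$ yields $\log w = L^\dag B \log \rho$. Subtracting,
\[
\log \widehat{W} - \log w \;=\; L^\dag B (\log R - \log \rho).
\]

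Next, I would Taylor-expand each edge coordinate. Because $F_{ji} = 1 - F_{ij}$ and $p_{ji} = 1 - p_{ij}$, we can write $\log R_{ij} = g(F_{ij})$ and $\log \rho_{ij} = g(p_{ij})$ for $g(x) = \log(x/(1-x))$. A direct computation shows $g'(p_{ij}) = 1/(p_{ij}(1-p_{ij})) = v_{ij}$ by~(\ref{eq:variance_Xij}), so Taylor's theorem with second-order remainder gives
\[
\log R_{ij} - \log \rho_{ij} \;=\; v_{ij}(F_{ij} - p_{ij}) + \tfrac12 g''(\xi_{ij})(F_{ij} - p_{ij})^2,
\]
for some $\xi_{ij}$ between $p_{ij}$ and $F_{ij}$. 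Stacking over all edges, identifying the first term with the corresponding entry of $\dv(F-p)$ and the second with $\Delta_{ij}$, yields~(\ref{eq:decomp}).

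To obtain~(\ref{eq:infbound}) I would condition on the high-probability event of Lemma~\ref{highprob}, which provides $|F_{ij} - p_{ij}| \leq \sqrt{C_{n,\delta}/(k v_{ij})}$ simultaneously for all edges with probability $1-\delta$. Since $p_{ij}$ is already bounded away from $0$ and $1$ by $1/(b+1)$, the assumption that $k$ dominates $b C_{n,\delta}$ forces this deviation to be at most, say, half of $\min(p_{ij}, 1-p_{ij})$. Hence $\xi_{ij}$ lies in a neighborhood on which a direct calculation of $g''(x) = (2x-1)/(x^2(1-x)^2)$ together with $p_{ij}(1-p_{ij}) = 1/v_{ij}$ gives $|g''(\xi_{ij})| = O(v_{ij}^2)$. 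Combining,
\[
|\Delta_{ij}| \;\leq\; O(v_{ij}^2) \cdot \frac{C_{n,\delta}}{k v_{ij}} \;=\; O\!\left(\frac{v_{ij} C_{n,\delta}}{k}\right) \;\leq\; O\!\left(\frac{b C_{n,\delta}}{k}\right),
\]
using $v_{ij} \leq 4b$. The main obstacle is this last step: one must calibrate the absolute constants $c_1, c_2$ in Assumption~\ref{assump:ratio_constant} carefully so that the Chernoff fluctuation of $F_{ij}$ stays strictly inside the region where $x^2(1-x)^2$ remains of order $1/v_{ij}^2$, uniformly over all edges and simultaneously with the event of Lemma~\ref{highprob}.
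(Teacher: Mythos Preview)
Your proposal is correct and follows essentially the same route as the paper: both subtract $\log w = L^\dag B \log\rho$ from $\log\widehat{W} = L^\dag B \log R$, Taylor-expand the logit function edge-by-edge to isolate the linear term $v_{ij}(F_{ij}-p_{ij})$ and a quadratic remainder, and then bound the remainder on the high-probability event of Lemma~\ref{highprob}. The only cosmetic difference is that the paper parametrizes via $g(x)=\log(1/x-1)$ at $F_{ji}$ rather than your $g(x)=\log(x/(1-x))$ at $F_{ij}$, and it makes the constant calibration explicit by using the identity $1/(v_{ij}p_{ij}^2)=\rho_{ji}\le b$ to show $k\ge 25\,b\,C_{n,\delta}$ suffices for $|F_{ij}-p_{ij}|\le \min(p_{ij},p_{ji})/5$---exactly the ``obstacle'' you flagged.
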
 
\begin{proof} 
By definition 
\[ \log w_i - \log w_j = \log \rho_{ij} \mbox{ for all } (i,j) \in E,\] which we can write as $B^T \log w = \log \rho.$ It follows that 
\[ \log w = (B B^T)^\dag B \log \rho = L^\dag B \log \rho, \] since $w$ is assumed normalized so that $\sum_{i=1}^n \log w_i = 0$. Combining this with Eq. (\ref{eq:explicit_LS}), we obtain 
\begin{equation} \label{eq:laplinv} \log \widehat W - \log w = L^\dag B (\log R - \log \rho). 
\end{equation}

We thus turn our attention to analyzing the vector $\log R - \log \rho$. Our analysis will  be conditioning on the event that for all $(i,j) \in E,$
\begin{equation} \label{eq:assump}  \{ \left| F_{ij} -  p_{ij} \right|  \leq  \sqrt{\frac{C_{n,\delta}}{k v_{ij}}} \}, 
\end{equation}
which, by Lemma \ref{highprob}, holds with probability at least $1-\delta$. We will call this event $\mathcal{A}$. 

We begin with one implication that comes from putting together event $\mathcal{A}$ and our assumption $k \geq c_1 b C_{n,\delta}$ (in Assumption \ref{assump:ratio_constant}) for a constant $c_1$ that we can choose: that we can assume that \begin{equation} \label{eq:fifth} \max_{(i,j) \in E} |F_{ij} - p_{ij} | \leq \frac{\min (p_{ij},p_{ji})}{5}. \end{equation} Indeed, from Eq. (\ref{eq:assump}) for this last equation to hold it suffices to have $k \geq 25 C_{n,\delta}/(v_{ij} p_{ij}^2) \mbox{ for all } (i,j) \in E.$ Observing that \[
\frac{1}{v_{ij} p_{ij}^2} = \frac{1}{\frac{1}{p_{ij} p_{ji}} p_{ij}^2} = \rho_{ji} \leq b,
\] we see that assuming $k \geq 25 b C_{n,\delta}$ is sufficient for Eq. (\ref{eq:fifth}) to hold conditional on event $\mathcal{A}$.

Our analysis of $\log R - \log \rho$ begins with the observation that since 
\[ R_{ij} = \frac{1-F_{ji}}{F_{ji}}, ~~ \rho_{ij} = \frac{1-p_{ji}}{p_{ji}} \] we have that
\begin{eqnarray*} \log R_{ij} - \log \rho_{ij} & = & \log \left( \frac{1}{F_{ji}} - 1 \right) - \log \left(  \frac{1}{p_{ji}} - 1 \right) 
\end{eqnarray*} Next we use Taylor's expansion of the function $g(x) = \log (1/x - 1)$, for which we have 
\[ g'(x)  =  \frac{1}{x(x-1)},~~ g'(p_{ji})  =  - v_{ij}, ~~g''(x)  =  \frac{1-2x}{x^2 (1-x)^2}
\]
to obtain that $\log R_{ij} - \log \rho_{ij} $ can thus be expressed as
\begin{equation} \label{eq:R-rho_ij}
%\log R_{ij} - \log \rho_{ij} = 
- v_{ij} (F_{ji} - p_{ji}) + \frac{1}{2} \frac{1-2z_{ji}}{z_{ji}^2 (1-z_{ji})^2} (F_{ij} - p_{ij})^2 
\end{equation}
where $z_{ji}$ lies between $p_{ji}$ and $F_{ji}$ (and $1-z_{ji}$ lies thus between $p_{ij}$ and $F_{ij}$). We can rewrite this equality in a condensed form
\begin{equation}\label{eq:R-rho_condensed} \log R - \log \rho = \dv(F-p) + \Delta, \end{equation}
where $\Delta$ corresponds to the second terms in (\ref{eq:R-rho_ij}), which we will now bound. 
Because we have conditioned on event $\mathcal{A}$, which, as discussed above implies $|F_{ji} - p_{ji}| \leq \min(p_{ji},p_{ji})/5$, we actually have that 
$z_{ji} \in [0.8 p_{ji}, 1.2 p_{ji}]$ and that $1-z_{ji}$ lying between $p_{ij}$ and $F_{ij}$ belongs to  $[0.8 p_{ij},1.2p_{ij}]$. Hence
\[ 
|\Delta_{ij}| \leq \frac{1}{2} \frac{1}{0.8^4 p_{ij}^2 p_{ji}^2} (F_{ij} - p_{ij})^2 
%\leq  c_3 v_{ij}^2 (F_{ij} - p_{ij})^2 \\ 
\leq  c_3 v_{ij} \frac{C_{n,\delta}}{k},
\] 
for  $c_3= \frac{1}{2\times(0.8)^4}$, and where we have used (\ref{eq:assump}) for the last inequality. Plugging this into Eq. (\ref{eq:R-rho_condensed}) and (\ref{eq:laplinv}) completes the proof, and  Eq. (\ref{eq:infbound}) follows from the last equation combined with the fact that  $v_{ij} \leq 4b$ for all $(i,j) \in E$. 
\end{proof} 

The following lemma bounds how much the {\em ratios} of our estimates $\widehat{W}_l$ differ from the corresponding ratios of the true weights $w_l$. To state it, we will use the notation \[ Q_{ij} = (\e_i - \e_j) (\e_i - \e_j)^T, \] where $\e_i$ is the standard notation for the $i$'th basis vector. Furthermore, we define the  product 
\begin{equation}\label{eq:def_product_ij}
    \langle x, y \rangle_{(i,j)} = x^T B^T L^\dag Q_{ij} L^\dag B y, ~~~~ ||x||_{(i,j)}^2 = \langle x, x \rangle_{(i,j)}. 
\end{equation}  Observe that the matrix $B^T L^\dag Q_{ij} L^\dag B$  is  positive semidefinite, which implies by standard arguments that
\[ \langle x+y, x+y \rangle_{(i,j)} \leq 2 \langle x,x \rangle_{(i,j)} + 2 \langle y, y\rangle_{(i,j)} \] holds for all vectors $x,y$.

\begin{lemma} Suppose $w$ is normalized so that $\sum_{i=1}^n \log w_i = 0$. There exist absolute constants $c_1, c_2 > 0$ such, under Assumption \ref{assump:ratio_constant}, with 
with probability $1-\delta$, we have that for all pairs $i,j=1, \ldots, n$,  
\begin{equation} \label{eq:comparison} \left( \log \frac{\widehat W_i}{\widehat W_j} - \log \frac{w_i}{w_j} \right)^2 \leq 2 \left| \left|\dv(F-p) \right| \right|_{(i,j)}^2 + 2 \left| \left | \Delta \right| \right|_{(i,j)}^2,
\end{equation}  and 
\[ ||\Delta||_{\infty} \leq O \left( \frac{b C_{n,\delta}}{k} \right).
\] \label{lemma:comparison}
\end{lemma}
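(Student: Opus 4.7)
The statement is a direct corollary of Lemma \ref{lemma:decomp}, rephrased in terms of the pairwise quantity $\log(\widehat{W}_i/\widehat{W}_j) - \log(w_i/w_j)$. The plan is to left-multiply the vector identity \eqref{eq:decomp} by $(\e_i-\e_j)^T$ and then exploit the definition of the seminorm $\|\cdot\|_{(i,j)}$.

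First, I would invoke Lemma \ref{lemma:decomp} to obtain, on the event of probability $1-\delta$,
\[
\log \widehat{W} - \log w = L^\dagger B \dv(F-p) + L^\dagger B \Delta,
\]
together with the bound $\|\Delta\|_\infty \le O(bC_{n,\delta}/k)$. The second conclusion of the present lemma is then immediate, so the work lies entirely in proving \eqref{eq:comparison}.

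Next, left-multiplying the above identity by $(\e_i-\e_j)^T$ gives
\[
\log \frac{\widehat{W}_i}{\widehat{W}_j} - \log \frac{w_i}{w_j} = (\e_i-\e_j)^T L^\dagger B \bigl(\dv(F-p) + \Delta\bigr).
\]
Squaring both sides and using $Q_{ij} = (\e_i-\e_j)(\e_i-\e_j)^T$, the right-hand side becomes
\[
\bigl(\dv(F-p)+\Delta\bigr)^T B^T L^\dagger Q_{ij} L^\dagger B \bigl(\dv(F-p)+\Delta\bigr) = \bigl\| \dv(F-p)+\Delta \bigr\|_{(i,j)}^2,
\]
which is exactly the squared $(i,j)$-seminorm defined in \eqref{eq:def_product_ij}.

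Finally, I would apply the inequality $\langle x+y,x+y\rangle_{(i,j)} \le 2\langle x,x\rangle_{(i,j)} + 2\langle y,y\rangle_{(i,j)}$ stated right after the definition of the seminorm (which follows from positive semidefiniteness of $B^T L^\dagger Q_{ij} L^\dagger B$), with $x = \dv(F-p)$ and $y = \Delta$. This yields \eqref{eq:comparison} directly. There is no genuine obstacle here: the only slightly delicate point is recognizing that the quadratic form induced by $B^T L^\dagger Q_{ij} L^\dagger B$ is a seminorm (not a norm, since $B^T$ has a kernel), which is exactly why the result is phrased in terms of $\|\cdot\|_{(i,j)}$ rather than ordinary Euclidean norms, and why the parallelogram-style inequality remains valid.
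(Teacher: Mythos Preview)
Your proof is correct and follows essentially the same approach as the paper: both invoke Lemma~\ref{lemma:decomp}, pass to the quadratic form associated with $Q_{ij}$ (you do this by left-multiplying by $(\e_i-\e_j)^T$ and squaring, the paper by directly computing $(\log\widehat{W}-\log w)^T Q_{ij}(\log\widehat{W}-\log w)$), and then apply the seminorm inequality $\langle x+y,x+y\rangle_{(i,j)}\le 2\|x\|_{(i,j)}^2+2\|y\|_{(i,j)}^2$.
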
 
\begin{proof} Observe that, on the one hand,  using Lemma \ref{lemma:decomp},
\begin{small}\begin{equation}\begin{array}{l} 
(\log \widehat{W} - \log w)^T  Q_{ij} (\log \widehat{W} - \log w)  \\
=  \left( L^\dag B \dv(F-p) + L^\dag B \Delta \right)^T Q_{ij} \left( L^\dag B \dv(F-p) + L^\dag B \Delta \right)   \\ 
 =  \langle \dv(F-p) + \Delta, \dv(F-p) + \Delta \rangle_{(i,j)}  \\ 
 \leq  2 \langle \dv(F-p), \dv(F-p) \rangle_{(i,j)} + 2  \langle \Delta, \Delta \rangle_{(i,j)}\label{eq:mainsum}  
%& =  2 {\rm diag}\{v\}(F-p)^T B^T L^\dag  E_{ij} L^\dag B {\rm diag}\{v\}(F-p) + 2 \Delta^T B^T L^\dag  E_{ij} L^\dag B \Delta,
\end{array}\end{equation}\end{small}
which is the right-hand side of (\ref{eq:comparison}). On the other hand, observe that 
\begin{small}\begin{equation}\begin{array}{l} 
\left( \log \widehat{W} - \log w \right) Q_{ij}   \left( \log \widehat{W} - \log w \right)\\
= \left( \log \widehat{W}_i - \log w_i - \left( \log \widehat{W}_j - \log w_j \right) \right)^2\\ 
=  \left( \log \frac{\widehat W_i}{\widehat W_j} - \log \frac{w_i}{w_j} \right)^2 \label{eq:squaredexpr}
\end{array}\end{equation}\end{small}
Combining Eq. (\ref{eq:mainsum}) with Eq. (\ref{eq:squaredexpr}) completes the proof. 
\end{proof}

Having proved Lemma \ref{lemma:comparison}, we now analyze each of the terms in the right-hand side of Eq. (\ref{eq:comparison}). We begin with the second term, i.e., with $||\Delta||_{(i,j)}^2$. To bound it, we will need the following inequality.

\begin{lemma}  \label{lemma:current} For any $\Delta \in \R^{|E|}$, we have that \[ |\Delta^T B^T L^\dag ({\bf e}_i - {\bf e}_j)| \leq ||\Delta||_{\infty} \sqrt{\res_{ij} |E_{ij}|} , 
\] where, recall, $\res_{ij}$ is the resistance between nodes $i$ and $j$, and $E_{ij}$ is the set of edges belonging to some simple path from $i$ to $j$. 
\end{lemma}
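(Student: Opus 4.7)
The plan is to interpret $f := B^T L^\dag (\e_i - \e_j)$ as an electrical current vector: viewing the graph as a circuit with unit resistors, $f_e$ is the (signed) current through edge $e$ when one unit of current is injected at $i$ and extracted at $j$. Equivalently, letting $\phi := L^\dag(\e_i - \e_j)$, we have $f = B^T \phi$, so $f_e = \phi_u - \phi_v$ for $e = (u,v)$. The proof then rests on two classical facts about this current.

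First, the energy identity
$$\|f\|_2^2 = (\e_i - \e_j)^T L^\dag B B^T L^\dag (\e_i - \e_j) = (\e_i - \e_j)^T L^\dag L L^\dag (\e_i - \e_j) = \res_{ij},$$
which uses $L L^\dag = I - \tfrac{1}{n}\mathbf{1}\mathbf{1}^T$ together with $(\e_i - \e_j)\perp \mathbf{1}$, and recognizes the final expression as the standard Laplacian formula for the effective resistance.

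Second, and this is the main step, I would prove $\mathrm{supp}(f) \subseteq E_{ij}$. Note that $\phi$ is harmonic on $V \setminus \{i,j\}$ since $L\phi = \e_i - \e_j$, so at any interior vertex $\phi$ equals the average of its neighbors' values. Fix any edge $e_0 = (u_0,v_0)$ with $\phi_{u_0} > \phi_{v_0}$. Starting from $u_0$, iteratively hop to a neighbor of strictly larger potential; harmonicity guarantees such a neighbor exists at every interior vertex (since we have just come from a neighbor of strictly lower potential), and the walk cannot revisit a vertex by strict monotonicity, so it is simple and must terminate at $i$ (the maximum of $\phi$). Symmetrically, from $v_0$ we obtain a simple strictly-decreasing walk terminating at $j$. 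The two walks are vertex-disjoint, because every vertex on the first walk has potential $\ge \phi_{u_0} > \phi_{v_0}$, which is $\ge$ every potential on the second walk. Concatenating them through $e_0$ produces a simple $i$-to-$j$ path containing $e_0$, so $e_0 \in E_{ij}$.

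With both facts in hand, the conclusion follows by H\"older's inequality followed by Cauchy-Schwarz:
$$|\Delta^T f| = \Big| \sum_{e \in E_{ij}} \Delta_e f_e \Big| \leq \|\Delta\|_\infty \sum_{e \in E_{ij}} |f_e| \leq \|\Delta\|_\infty \sqrt{|E_{ij}|}\, \|f\|_2 = \|\Delta\|_\infty \sqrt{|E_{ij}|\,\res_{ij}}.$$
The main obstacle is the support claim; the energy identity and the final chain of inequalities are routine. A subtle point to double-check in the monotone-walk argument is that harmonicity at an interior vertex $v$, together with the existence of a strictly-lower-potential neighbor (the previous vertex of the walk), really does force a strictly-higher-potential neighbor to exist at $v$—which is immediate from the averaging relation, but deserves an explicit line in a careful write-up.
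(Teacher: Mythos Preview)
Your argument is correct and follows the same skeleton as the paper's: set $u = B^T L^\dag(\e_i - \e_j)$, establish $\|u\|_2^2 = \res_{ij}$ and $\mathrm{supp}(u)\subseteq E_{ij}$, then finish with H\"older and Cauchy--Schwarz exactly as you do. The difference is in how the two sub-claims are obtained. For $\|u\|_2^2 = \res_{ij}$, the paper invokes Thompson's principle, whereas you compute it algebraically via $L^\dag L L^\dag = L^\dag$. For the support claim, the paper notes that the electrical current is an acyclic unit flow (again via Thompson's principle, since a cycle in the flow could be subtracted to lower energy) and then cites the flow decomposition theorem to write it as a superposition of simple $i$--$j$ path flows; you instead construct the simple path directly, using harmonicity of $\phi$ on $V\setminus\{i,j\}$ to run strictly monotone walks from a current-carrying edge up to $i$ and down to $j$. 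Your route is more self-contained and avoids the two external theorems; the paper's is shorter but relies on those citations. One cosmetic point for a careful write-up: the walk terminates at $i$ not because $i$ is ``the'' maximum of $\phi$ per se, but because the terminal vertex must be a local maximum and hence cannot be harmonic, while $j$ is ruled out since $(L\phi)_j=-1$ forces $j$ to have a strictly higher neighbor.
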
 

\begin{proof} The result follows  from circuit theory, and we sketch it out along with the relevant references. The key idea is that the vector $u = B^T L^\dag ({\bf e}_i - {\bf e}_j)$ has a simple electric interpretation. We have that  $u \in \R^{|E|}$ and the $k$'th entry of $u$ is the current on edge $k$ when a unit of current is put into node $u$ at removed at node $j$. For details, see the discussion in Section 4.1 of \cite{vishnoi2013lx}.

This lemma  follows from several consequences of this interpretation. First, the entries of $u$ are an acyclic flow from $i$ to $j$; this follows, for example, from Thompson's principle which asserts that the current flow minimizes energy (see Theorem 4.8 of \cite{vishnoi2013lx}). Moreover, Thompson's principle further asserts that  $\res_{ij} = ||u||_2^2$. Finally, by the flow decomposition theorem (Theorem 3.5 in \cite{ahuja2017network}), we can decompose this flow along simple paths from $i$ to $j$; this implies that $|{\rm supp}(u)| \leq |E_{ij}|$.

With these facts in mind, we apply Cauchy-Schwarz to obtain   
\[ ||u||_1 \leq ||u||_2 \sqrt{|{\rm supp}(u)|} \leq \sqrt{\res_{ij} |E_{ij}|},
\] and then conclude the proof using Holder's inequality \begin{small}
\[ |\Delta^T B^T L^\dag (\e_i - \e_j)| = |\Delta^T u| \leq ||\Delta||_{\infty} ||u||_1  ||\Delta||_{\infty} \sqrt{\res_{ij} |E_{ij}|}. \] \end{small}\end{proof} 
As a corollary, we are able to bound the second term in Eq. (\ref{eq:comparison}). The proof follows immediately by combining Lemma \ref{lemma:current} with Lemma \ref{lemma:comparison}. 

\begin{corollary}\label{cor:secterm}
There exist absolute constants $c_1, c_2 > 0$ such that,
under Assumption \ref{assump:ratio_constant},
with probability $1-\delta$, we have that for all pairs $i,j=1, \ldots, n$,  \[ \left| \left| \Delta \right| \right|_{(i,j)}^2  \leq O \left( \res_{ij} E_{ij} \frac{b^2 C_{n,\delta}^2}{k^2} \right). \]\end{corollary}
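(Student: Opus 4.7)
The plan is very short because the two prior lemmas (Lemma \ref{lemma:current} and Lemma \ref{lemma:comparison}) essentially hand us what we need; the proof is just a clean bookkeeping combination, as the excerpt's last sentence acknowledges.

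First I would unpack the norm on the left-hand side using the definition in Eq. (\ref{eq:def_product_ij}). Since $Q_{ij} = (\e_i-\e_j)(\e_i-\e_j)^T$ has rank one, the quadratic form collapses to a square of a scalar:
\begin{equation*}
\|\Delta\|_{(i,j)}^2 = \Delta^T B^T L^\dag Q_{ij} L^\dag B \Delta = \bigl(\Delta^T B^T L^\dag (\e_i-\e_j)\bigr)^2.
\end{equation*}
This reduction is the key observation that makes Lemma \ref{lemma:current} directly applicable.

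Next I would invoke Lemma \ref{lemma:current}, which bounds exactly the scalar inside the square:
\begin{equation*}
\bigl|\Delta^T B^T L^\dag (\e_i-\e_j)\bigr| \leq \|\Delta\|_\infty \sqrt{\res_{ij}\, |E_{ij}|}.
\end{equation*}
Squaring both sides yields $\|\Delta\|_{(i,j)}^2 \leq \|\Delta\|_\infty^2\, \res_{ij}\, |E_{ij}|$.

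Finally, I would bring in Lemma \ref{lemma:comparison} (equivalently, the $\ell_\infty$ control on $\Delta$ from Lemma \ref{lemma:decomp}), which states that under Assumption \ref{assump:ratio_constant}, on an event of probability at least $1-\delta$ (the event $\mathcal{A}$ of Lemma \ref{highprob}), we have $\|\Delta\|_\infty \leq O(b\, C_{n,\delta}/k)$. Substituting this bound into the previous display gives
\begin{equation*}
\|\Delta\|_{(i,j)}^2 \leq O\!\left( \res_{ij}\, |E_{ij}|\, \frac{b^2 C_{n,\delta}^2}{k^2}\right),
\end{equation*}
uniformly over all pairs $(i,j)$ since the high-probability event is a single event that controls $\|\Delta\|_\infty$ globally. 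There is no real obstacle here: the only thing to be careful about is that the probability-$1-\delta$ event is the same event from Lemma \ref{highprob} used in both prior lemmas, so no union bound over pairs $(i,j)$ is required and no extra $\log n$ factor slips in.
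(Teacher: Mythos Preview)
Your proposal is correct and matches the paper's approach exactly: the paper states that the corollary follows immediately by combining Lemma \ref{lemma:current} with Lemma \ref{lemma:comparison}, and your write-up is precisely the clean unpacking of that combination via the rank-one structure of $Q_{ij}$.
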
 

We now turn to the first-term in Eq. (\ref{eq:comparison}), which is bounded in the next lemma. 

\begin{lemma} There exist absolute constants $c_1, c_2$ such that,
under Assumption \ref{assump:ratio_constant},  
with probability $1-\delta$ we have that for all pairs $i,j=1, \ldots,n$, 
\[   \left| \left| \dv(F-p) \right| \right|_{(i,j)}^2 \leq O \left(  \res_{ij}   \frac{b^2  }{k} \left(1+\log \frac{1}{\delta} \right) \right)   \]  \label{lem:firstterm}
\end{lemma}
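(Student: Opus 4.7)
The plan is to reduce the semi-norm to a concentration question about a single scalar, and then apply Hoeffding's inequality. First, since $Q_{ij} = (\e_i-\e_j)(\e_i-\e_j)^T$ has rank one, unrolling the definition (\ref{eq:def_product_ij}) gives
\[
    \|\dv(F-p)\|_{(i,j)}^2 = \bigl(u_{ij}^T \dv (F-p)\bigr)^2,
\]
where $u_{ij} := B^T L^\dag (\e_i - \e_j) \in \R^{|E|}$ is exactly the current-flow vector from the proof of Lemma~\ref{lemma:current}. Thomson's principle, already invoked there, yields $\|u_{ij}\|_2^2 = \res_{ij}$.

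Second, I would expand $u_{ij}^T \dv(F-p)$ as a sum of $k|E|$ independent, centered, bounded random variables by writing $F_e = k^{-1}\sum_{m=1}^k X_e^{(m)}$, where $X_e^{(m)}$ is the outcome of the $m$-th coin flip on edge $e$. Then
\[
    u_{ij}^T \dv(F-p) = \frac{1}{k}\sum_{e\in E}\sum_{m=1}^k u_{ij,e}\, v_e\, \bigl(X_e^{(m)} - p_e\bigr).
\]
The $m$-th summand on edge $e$ lies in an interval of length $|u_{ij,e}|\, v_e \le 4b\,|u_{ij,e}|$, using the bound $v_e \le 4b$ from (\ref{eq:variance_Xij}). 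Hence the sum of squared ranges is at most $k\sum_e u_{ij,e}^2 v_e^2 \le 16 b^2 k\,\res_{ij}$.

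Third, Hoeffding's inequality gives, for any fixed pair $(i,j)$,
\[
    P\bigl(|u_{ij}^T \dv (F-p)| > s\bigr) \le 2\exp\bigl(- k s^2 / (8 b^2\, \res_{ij})\bigr),
\]
so with probability at least $1-\delta'$, $\|\dv(F-p)\|_{(i,j)}^2 \le O\!\bigl(b^2 \res_{ij} \log(1/\delta')/k\bigr)$. Finally, a union bound over the $\binom{n}{2}$ pairs with $\delta' = \delta/n^2$ yields the lemma; the resulting $\log n$ factor is absorbed into the $(1+\log(1/\delta))$ term under Assumption~\ref{assump:ratio_constant} (which already forces $C_{n,\delta} \gtrsim \log(n/\delta)$).

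The main subtlety is the choice between Hoeffding and Bernstein. A Bernstein-type argument would leverage the sharper variance bound $\sum_e u_{ij,e}^2 v_e \le 4b\,\res_{ij}$ (rather than the cruder slack $v_e^2 \le 4b\, v_e$) and could in principle improve the $b^2$ to $b$, but at the price of an undesirable $\|u_{ij}\|_\infty$ term in the sub-exponential tail that would have to be controlled uniformly in $(i,j)$. Hoeffding sidesteps this by treating all $v_e$'s uniformly at the maximum $4b$ and cleanly matches the $b^2\res_{\max}$ rate that appears as the dominant term in Theorem~\ref{mainthm}, which is why I would adopt it here.
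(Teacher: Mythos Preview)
Your argument follows the same skeleton as the paper's: both exploit that $Q_{ij}$ has rank one, so $\|\dv(F-p)\|_{(i,j)}^2$ is the square of the scalar $u_{ij}^T\dv(F-p)$ with $\|u_{ij}\|_2^2=\res_{ij}$, and then control this scalar via a subgaussian tail. The only cosmetic difference is that the paper packages $\dv(F-p)$ as a subgaussian vector with per-coordinate parameter $4b/\sqrt{k}$ and invokes the Hsu--Kakade--Zhang quadratic-form inequality on $M=B^TL^\dag Q_{ij}L^\dag B$, computing ${\rm tr}(M)=\res_{ij}$; since $M$ has rank one this collapses exactly to the scalar bound you obtain directly by Hoeffding. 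Your route is the more elementary of the two, and your side remark on Bernstein versus Hoeffding is on point.

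One step does not go through as written. You claim the $\log n$ from the union bound ``is absorbed into the $(1+\log(1/\delta))$ term under Assumption~\ref{assump:ratio_constant}.'' That assumption only lower-bounds the separate quantity $C_{n,\delta}$ by $c_1\log(n/\delta)$; nothing in it forces $\log n\lesssim 1+\log(1/\delta)$, so the absorption is unjustified. The honest bound after union-bounding over $\binom{n}{2}$ pairs is $O\!\bigl(b^2\res_{ij}(1+\log(n/\delta))/k\bigr)$. It is worth noting that the paper's own proof does not address this either: it simply takes $t=\log(1/\delta)$ and obtains the inequality for a \emph{single} pair $(i,j)$, never union-bounding, so the ``for all pairs'' clause in the lemma statement is not actually established there. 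This looks like a minor slip in the paper rather than a defect particular to your approach.
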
 

\begin{proof}
 The random variable $X_{ij} - p_{ij}$ (where, recall, $X_{ij}$ is the outcome of a single comparison between nodes $i$ and $j$) is zero-mean and supported on an interval of length $1$, and consequently it is subgaussian\footnote{A random variable $Y$ is said to be subgaussian with parameter $\tau$ if $E[e^{\lambda Y}] \leq e^{\tau^2 \lambda^2/2}$ for all $\lambda$.} with parameter $1$ (see \red{Section 5.3} of \cite{lattimore2018bandit}). By standard properties of subgaussian random variables, it follows that \red{$v_{ij}(F_{ij} - p_{ij})$} is subgaussian with $\tau = v_{ij}/\sqrt{k} \leq 4b/\sqrt{k}$. It follows then from  \red{Theorem 2.1} of \cite{hsu2012tail} for subgaussian random variables applied to $||(e_i-e_j) B^TL^{\dag} (F-p)||^2 =  \left| \left| \dv(F-p) \right| \right|_{(i,j)}^2$, that for any $t\geq 1$ there is a probability at least $1-e^{-t}$ that \begin{footnotesize}
\begin{eqnarray*}
\left| \left| \dv(F-p) \right| \right|_{(i,j)}^2  
&  \leq & \frac{16b^2}{k}\prt{{\rm tr}(M) + 2\sqrt{{\rm tr} (M^2)t} +2\norm{M}t} \nonumber \\
&   \leq &  \frac{16b^2}{k} {\rm tr}(M) (1+ 4t),\label{eq:ineq_trM14t}
\end{eqnarray*} \end{footnotesize}
where we have used $\sqrt{t}\leq t$, ${\rm tr} (M^2) \leq {\rm tr} (M)^2$ and $\norm{M}\leq  {\rm tr} (M)$.
We now compute this trace. 
\begin{eqnarray} {\rm tr}(M) & = & {\rm tr}(B^T L^\dag Q_{ij} L^\dag B) \nonumber \\ & = & {\rm tr}( Q_{ij} L^\dag B B^T L^\dag) \nonumber  
 =  {\rm tr}(Q_{ij} L^\dag ) \nonumber \\ 
& = & (\e_i - \e_j)^T L^\dag (\e_i - \e_j) 
 =  \res_{ij},\label{eq:tracecalc}
\end{eqnarray} where the  second equality uses the well-known property of the Moore-Penrose pseudo-inverse: $A^\dag A A^\dag = A^\dag$ for any matrix $A$ (see Section 2.9 of \cite{drineas2018lectures}); and last equality uses a well-known relation between resistances and Laplacian pseudoinverses, see Chapter 4 of \cite{vishnoi2013lx}. The result follows then from the application of (\ref{eq:ineq_trM14t}) to $t=\log{1/\delta}$.
\end{proof}

Having obtained the bounds in the preceding sequence of lemmas, we now return to Lemma \ref{lemma:comparison} and ``plug in'' the results we have obtained. The result is the following lemma. 

\begin{lemma}\label{lemma:puttingtogether} There exist absolute constants $c_1, c_2 > 0$ such, 
under Assumption \ref{assump:ratio_constant}, 
with probability $1-\delta$, we have that for all pairs $i,j=1, \ldots, n$, \begin{footnotesize} \begin{eqnarray*}
&\left[{\frac{\widehat W_i}{\widehat W_j} - \rho_{ij}} \right]^2 \leq O \left( \rho_{ij}^2 \frac{b \res_{ij}}{k}   \left(b  (1+\log (1/\delta))  + \frac{b E_{i,j} C_{n,\delta}^2}{k} \right) \right)\end{eqnarray*} \end{footnotesize}
\end{lemma}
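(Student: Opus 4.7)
The plan is to chain together the three preceding results --- Lemma \ref{lemma:comparison}, Corollary \ref{cor:secterm}, and Lemma \ref{lem:firstterm} --- to first control the log-ratio $\log(\widehat W_i/\widehat W_j) - \log \rho_{ij}$ and then convert this into a bound on the ratio $\widehat W_i/\widehat W_j - \rho_{ij}$ itself. Since the three earlier results each hold with probability $1-\delta$, a union bound (absorbed into constants by replacing $\delta$ with $\delta/3$) shows they hold simultaneously with probability $1-\delta$.

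First, I would directly substitute the bound $\|\Delta\|_{(i,j)}^2 \leq O\bigl(\res_{ij} E_{ij} b^2 C_{n,\delta}^2/k^2\bigr)$ from Corollary \ref{cor:secterm}, and the bound $\|\dv(F-p)\|_{(i,j)}^2 \leq O\bigl(\res_{ij}\, b^2(1+\log(1/\delta))/k\bigr)$ from Lemma \ref{lem:firstterm}, into the right-hand side of (\ref{eq:comparison}) in Lemma \ref{lemma:comparison}. This immediately yields, with probability $1-\delta$,
\[
\left(\log \frac{\widehat W_i}{\widehat W_j} - \log \rho_{ij}\right)^2 \leq O\!\left(\frac{b^2 \res_{ij}}{k}\left((1+\log(1/\delta)) + \frac{E_{ij} C_{n,\delta}^2}{k}\right)\right).
\]

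Second, I would convert the log-ratio error into a ratio error by a standard mean value theorem argument. Setting $x = \widehat W_i/\widehat W_j$ and $y = \rho_{ij}$, the mean value theorem applied to $\log$ gives $|x - y| = \xi\,|\log x - \log y|$ for some $\xi$ between $x$ and $y$. Provided $|\log x - \log y| \leq 1/2$, we have $\max(x,y) \leq e^{1/2}\min(x,y)$, hence $\xi \leq 2y$, and therefore $(x-y)^2 \leq 4\, y^2 (\log x - \log y)^2 = 4\rho_{ij}^2 (\log x - \log y)^2$. Combining with the log-ratio bound gives the claimed inequality (after rearranging the factor of $b^2$ as $b \cdot b$ to match the stated form).

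The main (and essentially only) subtlety is verifying that the precondition $|\log x - \log y| \leq 1/2$ actually holds under Assumption \ref{assump:ratio_constant}. This requires showing that the right-hand side of the log-ratio bound is bounded by a constant. Since Assumption \ref{assump:ratio_constant} gives $k \geq c_2 b (C_{n,\delta}+1)\max\{\res_{\max}, E_{\max}\}$ and $C_{n,\delta} \geq c_1 \log(n/\delta) \gtrsim \log(1/\delta)$, we obtain $k \gtrsim b^2 \res_{ij}(1+\log(1/\delta))$ by taking the constants $c_1, c_2$ large enough (using $b \geq 1$), which controls the first term; and $k^2 \gtrsim b^2 \res_{ij} E_{ij} C_{n,\delta}^2$, which controls the second term. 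So the precondition is secured by the assumption, and the conversion goes through.
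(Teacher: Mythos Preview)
Your proposal is correct and follows essentially the same route as the paper: combine Lemma~\ref{lemma:comparison}, Corollary~\ref{cor:secterm}, and Lemma~\ref{lem:firstterm} to bound the log-ratio, use Assumption~\ref{assump:ratio_constant} to ensure this bound is at most a constant, and then exponentiate via a mean-value argument (the paper writes this as $|e^a-e^b|\leq \max\{e^a,e^b\}\,|a-b|$, which is the same thing). Your explicit mention of the union bound and your choice of the threshold $1/2$ rather than $1$ are cosmetic differences only.
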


\begin{proof} 
By putting together Lemma \ref{lemma:comparison} with Corollary \ref{cor:secterm} and Lemma \ref{lem:firstterm}, we obtain that, with probability at least $1-\delta$,\begin{equation}\label{eq:baseeq}\begin{array}{l}\left( \log \frac{\widehat W_i}{\widehat W_j} - \log \frac{w_i}{w_j} \right)^2 \\ \leq   O \left( \frac{b \res_{ij}}{k}   \left(   b  (1+\log (1/\delta))  + \frac{b E_{i,j} C_{n,\delta}^2}{k} \right)  \right)
\end{array}\end{equation} 
\red{Observe that for a sufficiently large $c_2$, if $k\geq c_2E_{ij}C_{n,\delta}^2$ then the term $b  (1+\log (1/\delta))  + \frac{b E_{i,j} C_{n,\delta}^2}{k}$ is bounded by $O(b(1 + \log (1/\delta)))$. Hence, if $k$ is also at least $c_2 b^2 \res_{ij} (1+\log(1/\delta))$ (which holds due to Assumption \ref{assump:ratio_constant}), equation (\ref{eq:baseeq}) implies} 
\begin{equation}\label{eq:bound_abs_dif}
 \abs{\log \frac{\widehat{W}_i}{\widehat{W}_j} - \log \frac{w_i}{ w_j}} \leq 1.
 \end{equation}
A particular implication is that $\max\left(e^{\log(\widehat{W}_i/\widehat{W}_j)},\right.$ $\left. e^{\log (w_i/w_j)}\right)$ $\leq  e^{1+\log(w_i/w_j)}$.  Applying the inequality  $|e^{a} - e^b| \leq \max\{e^a, e^b\} |a-b|$ to (\ref{eq:bound_abs_dif}) leads then to
%\begin{eqnarray}
$$\left| \frac{\widehat W_i}{\widehat W_j} - \frac{w_i}{w_j} \right| 
%&\leq&  \max\prt{e^{\log(\widehat{W}_i/\widehat{W}_j)},e^{\log (w_i/w_j)}}
% \left| \log \frac{\widehat{W}_i}{\widehat{W}_j} - \log \frac{w_i}{ w_j} \right|
 %\\ 
 \leq e^{1+\log (w_i/w_j)} \left| \log \frac{\widehat{W}_i}{\widehat{W}_j} - \log \frac{w_i}{ w_j} \right|
$$ and now using $e^{\log(w_i/w_j)} = \rho_{ij}$, the proof follows by combining the last equation with Eq. (\ref{eq:baseeq}).
\end{proof}

The next lemma demonstrates how to convert Lemma \ref{lemma:puttingtogether} into a bound on the relative error between $\widehat{W}$ and the true weight vector $w$.

\begin{lemma}\label{lem:ratio->criterion}
Suppose we have that 
$$
\left[ {\frac{\widehat W_i}{\widehat W_j} - \rho_{ij}} \right]^2 \leq \rho_{ij}^2 s_{ij}(k),
$$
for all $i,j=1, \ldots, n$. Fix index $\ell \in \{1, \ldots, n\}$. Then there hold
\begin{eqnarray}\label{eq:bound_s*} \sin(w,\hw) &\leq&  \max_j s_{j\ell}(k),\\ 
\label{avgbound}\sin(w,\hw) &\leq&  b^2 s_{\rm avg},\end{eqnarray}
where $s_{\rm avg} = \frac{\sum_{a,b = 1, \ldots, n} s_{ab} }{n^2}$.
\end{lemma}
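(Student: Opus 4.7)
The plan is to exhibit a single scalar $\alpha$ explicitly and bound the relative Euclidean error $\|\widehat W - \alpha w\|_2 / \|\alpha w\|_2$, which by Equation (\ref{eq:error_quad_alpha}) upper bounds $|\sin(w,\widehat W)|$. Because the hypothesis controls only ratios of the form $\widehat W_i/\widehat W_j$, the natural choice is $\alpha = \widehat W_\ell / w_\ell$, which makes $\alpha w_j = \widehat W_\ell\, \rho_{j\ell}$ for every $j$. The coordinate-wise error is then
$$
\widehat W_j - \alpha w_j \;=\; \widehat W_\ell\left(\frac{\widehat W_j}{\widehat W_\ell} - \rho_{j\ell}\right),
$$
so applying the hypothesis to the pair $(j,\ell)$ yields $(\widehat W_j-\alpha w_j)^2 \leq \widehat W_\ell^2\,\rho_{j\ell}^2\,s_{j\ell}(k) = (\alpha w_j)^2\, s_{j\ell}(k)$, which is exactly the pointwise ingredient needed to form a sum.

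The first bound (\ref{eq:bound_s*}) follows immediately: summing over $j$ and factoring out $\max_j s_{j\ell}(k)$ gives $\|\widehat W-\alpha w\|_2^2 \leq \max_j s_{j\ell}(k)\cdot \|\alpha w\|_2^2$, and then (\ref{eq:error_quad_alpha}) closes the argument (noting that what is actually produced is a bound on $\sin^2$, consistent with how the lemma is applied in Theorem~\ref{mainthm}).

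For the second bound (\ref{avgbound}), the same calculation should avoid passing through $\max_j s_{j\ell}(k)$. Instead, I would exploit the skewness assumption $\max_{i,k} w_i/w_k \leq b$ to get $w_j^2 \leq b^2 \min_k w_k^2 \leq b^2 \|w\|_2^2/n$, so that $(\alpha w_j)^2 \leq (b^2/n)\,\|\alpha w\|_2^2$ for every $j$. Plugging this into the sum yields $\sin^2(w,\widehat W) \leq (b^2/n)\sum_j s_{j\ell}(k)$ for every choice of $\ell$. Since the left-hand side does not depend on $\ell$, averaging the right-hand side over $\ell \in \{1,\ldots,n\}$ gives exactly $b^2\, s_{\rm avg}$.

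There is no substantive obstacle in this lemma; once the correct $\alpha$ is in hand, it is a few lines of manipulation. The only two design decisions are (i) choosing $\alpha = \widehat W_\ell/w_\ell$ so that the hypothesized ratio bound enters the coordinate-wise error cleanly, and (ii) deferring the skewness bound to the second inequality so that one can average over $\ell$ rather than maximize, thereby converting $\max_j s_{j\ell}$ into the symmetric $s_{\rm avg}$ at the cost of an extra factor of $b^2$.
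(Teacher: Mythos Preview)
Your proposal is correct and follows essentially the same route as the paper: both take $\alpha=\widehat W_\ell/w_\ell$, reduce to $\frac{\sum_i \rho_{i\ell}^2\, s_{i\ell}}{\sum_i \rho_{i\ell}^2}$, bound by the max for (\ref{eq:bound_s*}), and use the skewness ratio $b$ to pull out a factor $b^2/n$ for (\ref{avgbound}). The only cosmetic difference is the final step for (\ref{avgbound}): the paper selects the single $\ell$ with $\frac{1}{n}\sum_i s_{i\ell}\le s_{\rm avg}$ (which exists by pigeonhole), whereas you average the inequality over $\ell$; both yield the same bound.
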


\begin{proof}
It follows from Lemma \ref{lem:sinus} that for all $\alpha$,
$$
\sin(w,\hw) \leq \frac{||\widehat W - \alpha w ||_2^2}{||\alpha w||_2^2}. 
$$
Taking $\alpha = \widehat W_\ell/w_\ell$, we get
\[ 
 \frac{||\widehat W - \alpha w ||_2^2}{|| \alpha w||_2^2}  = 
\frac{\sum_i (\widehat W_i- \frac{\widehat W_\ell}{w_\ell} w_i)^2}{\sum_i \frac{\widehat W_\ell^2}{w_\ell^2} w_i^2} \\
% \frac{\widehat W_\ell^2\sum_i (\frac{\widehat W_i}{\widehat W_\ell}-\frac{w_i}{w_\ell})^2}{ \widehat W_\ell^2\sum_i \frac{w_i^2}{w_\ell^2}}  % =\\
=
 \frac{\sum_i (\frac{\widehat W_i}{\widehat W_\ell}-\rho_{i\ell})^2}{ \sum_i \rho_{i\ell}^2}.  
\]
Using the assumption of this lemma, we obtain
\begin{equation}\label{eq:bound_sio(k)}
 \frac{||\widehat W - \alpha w ||_2^2}{|| \alpha w||_2^2} \leq  \frac{\sum_i s_{i\ell}(k)\rho_{il}^2}{ \sum_i \rho_{i\ell}^2}, 
\end{equation}
from which (\ref{eq:bound_s*}) follows. Another consequence of (\ref{eq:bound_sio(k)}) is that
\begin{equation}\label{eq:pre_bound_avg}
 \frac{||\widehat W - \alpha w ||_2^2}{|| \alpha w||_2^2} \leq  \frac{(\max_i \rho_{i\ell}^2)\sum_i s_{i\ell}(k)}{ n \min_j \rho_{j\ell}^2} \leq b^2 \frac{\sum_{i=1}^n s_{i\ell}}{n},
\end{equation} 
where we used  
$$
\red{\frac{\max_i \rho_{i\ell}}{\min_i \rho_{i\ell} }= \frac{\max_i w_i/w_\ell }{\min_j w_j/w_\ell} = \max_{i,j}\frac{w_i}{w_j}\leq b.}
$$
Observe now that since $s_{\rm avg} = \frac{1}{n}\sum_\ell \frac{1}{n}\sum_i s_{i\ell} $, there must exist at least one $\ell$ for which $\sum_{i=1}^n s_{i\ell}\leq s_{\rm avg}$. Hence (\ref{avgbound}) follows from (\ref{eq:pre_bound_avg}).
\end{proof}

Having proven this last lemma, Theorem \ref{mainthm} follows immediately by combining By Lemma \ref{lemma:puttingtogether} and Lemma \ref{lem:ratio->criterion}.

%%%%%%%%%%%%%%%%%%%%%%%%%%%%%%%%%%%%%%%%%%%%%%%%%%%%%%%%%%%%%%%%%%%%%%%%%%%%%%%%%%%%%%%%
%%%%%%%%%%%%%%%%%%%%%%%%%%%%%%%%%%%%%%%%%%%%%%%%%%%%%%%%%%%%%%%%%%%%%%%%%%%%%%%%%%%%%%%%
%%%%%%%%%%%%%%%%%%%%%%%%%%%%%%%%%%%%%%%%%%%%%%%%%%%%%%%%%%%%%%%%%%%%%%%%%%%%%%%%%%%%%%%%
\section{Experiments}\label{sec:expe}
%%%%%%%%%%%%%%%%%%%%%%%%%%%%%%%%%%%%%%%%%%%%%%%%%%%%%%%%%%%%%%%%%%%%%%%%%%%%%%%%%%%%%%%%
%%%%%%%%%%%%%%%%%%%%%%%%%%%%%%%%%%%%%%%%%%%%%%%%%%%%%%%%%%%%%%%%%%%%%%%%%%%%%%%%%%%%%%%%

\providecommand{\Ntest}{N_{\rm test}}

The purpose of this section is two-fold. First, we would like to demonstrate that simulations are consistent with Theorem \ref{mainthm}; in particular, we would like to see error scalings that are consistent with the average resistance, rather than e.g., spectral gap.  Second, we wish observe that, although our results are asymptotic, in practice the scaling with resistance appears immediately, even for small $k$. Since our main contribution is theoretical, and since we do not claim that our algorithm is better than available methods in practice, we do not perform a comparison to other methods in the literature. Additional details about our experiments are provided in \red{Section \ref{sec:info_expe} in the Supplementary Information.}

We begin with Erdos-Renyi comparison graphs. Figure \ref{fig:ER_k} shows the evolution of the error with the number $k$ of comparisons per edge. The error decreases as $O(1/\sqrt{k})$ as predicted. Moreover, this is already the case for small values of $k$.

Next we move to the influence of the graph properties. Figure \ref{fig:ER_prog_graph} shows that the average error is asymptotically constant when $n$ grows while keeping the expected degree $d:=(n-1)p$ constant, and that it decreases as $O(1/\sqrt{d})$ when the expected degree grows while keeping $n$ constant. This is consistent with our analysis in Table \ref{tab:comp}, and with the results \cite{boumal2014concentration} showing that the average resistance $\res_{\rm avg}$ of Erdos-Renyi graphs evolves as $O(1/d)$.

We next consider lattice graphs 
in Figure \ref{fig:lattice}. For the 3D lattice, the error appears to converge to a constant when $n$ grows, which is consistent with our results since the average resistance of 3D lattice is bounded independently of $n$. The trend for the 2D lattice appears also 
consistent with a bound in $O(\sqrt{\log n})$ predicted by our results since the resistance on 2D lattice evolves as $O(\log n)$.

\begin{figure}[h!]
    \centering
    \includegraphics[scale = .25]{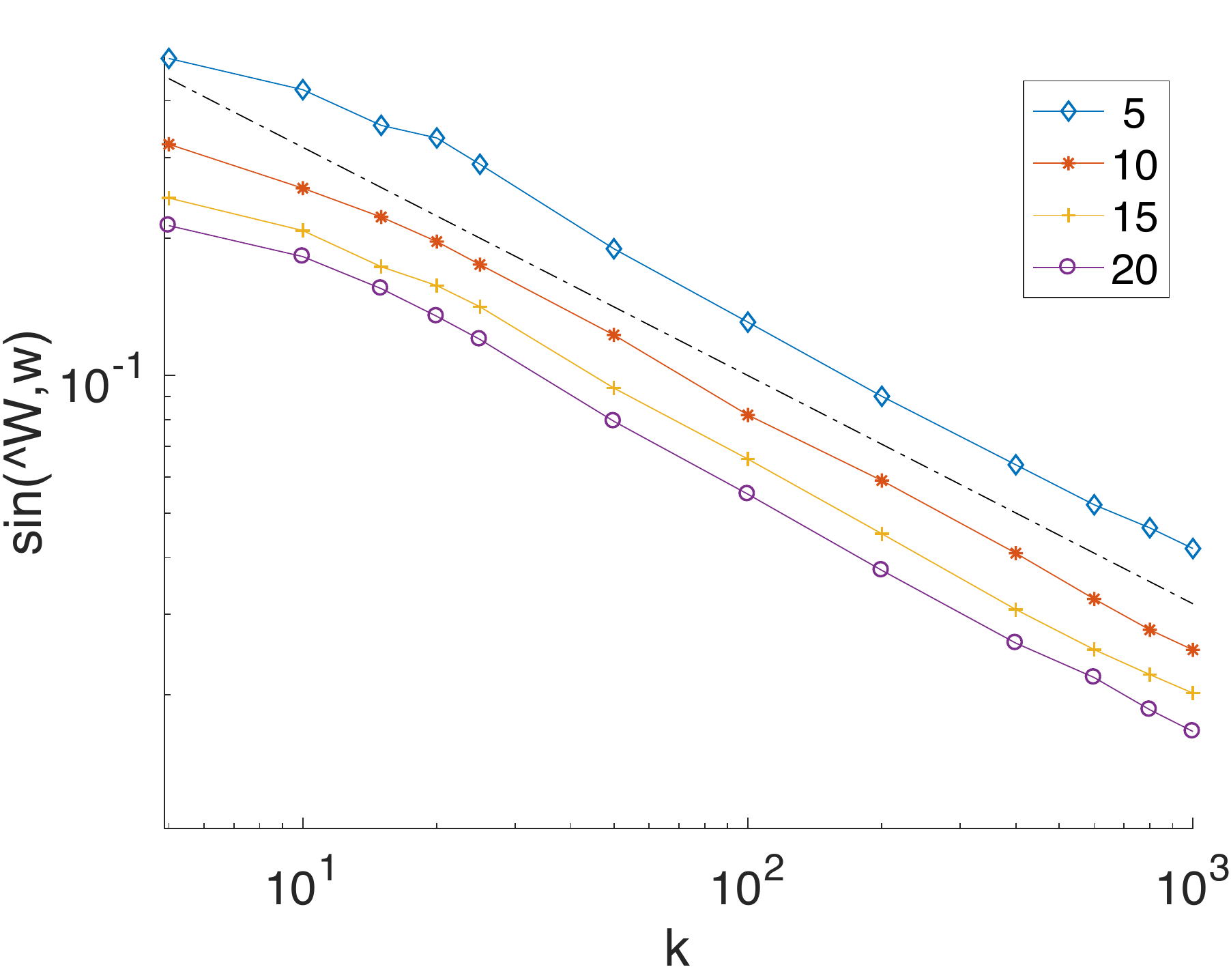}
    \caption{Error evolution with the number $k$ of comparisons per edge in Erdos-Renyi graphs of 100 nodes, for different expected degrees $d=(n-1)p$, with $b=10$. Each line corresponds to a different expected degree. The results are averaged over $\Ntest=100$ tests. The dashed line is proportional to $1/\sqrt{k}$.}
   %  graph_type: 'ER'
%             n: 100
 %            d: [5 10 15 20]
  %           b: 10
   %          k: [5 10 15 20 25 50 100 200 400 600 800 1000]
   %     n_test: 100
%      d_n_test: 50
 %    name_simu: 'jan_ER_evk_b'
%LSevol_error_kd_special_4('jan_ER_evk_c',1,1)
    \label{fig:ER_k}
\end{figure}
\begin{figure}[h!]
    \centering
    \begin{tabular}{cc}
\hspace{-.5cm}    \includegraphics[scale = .22]{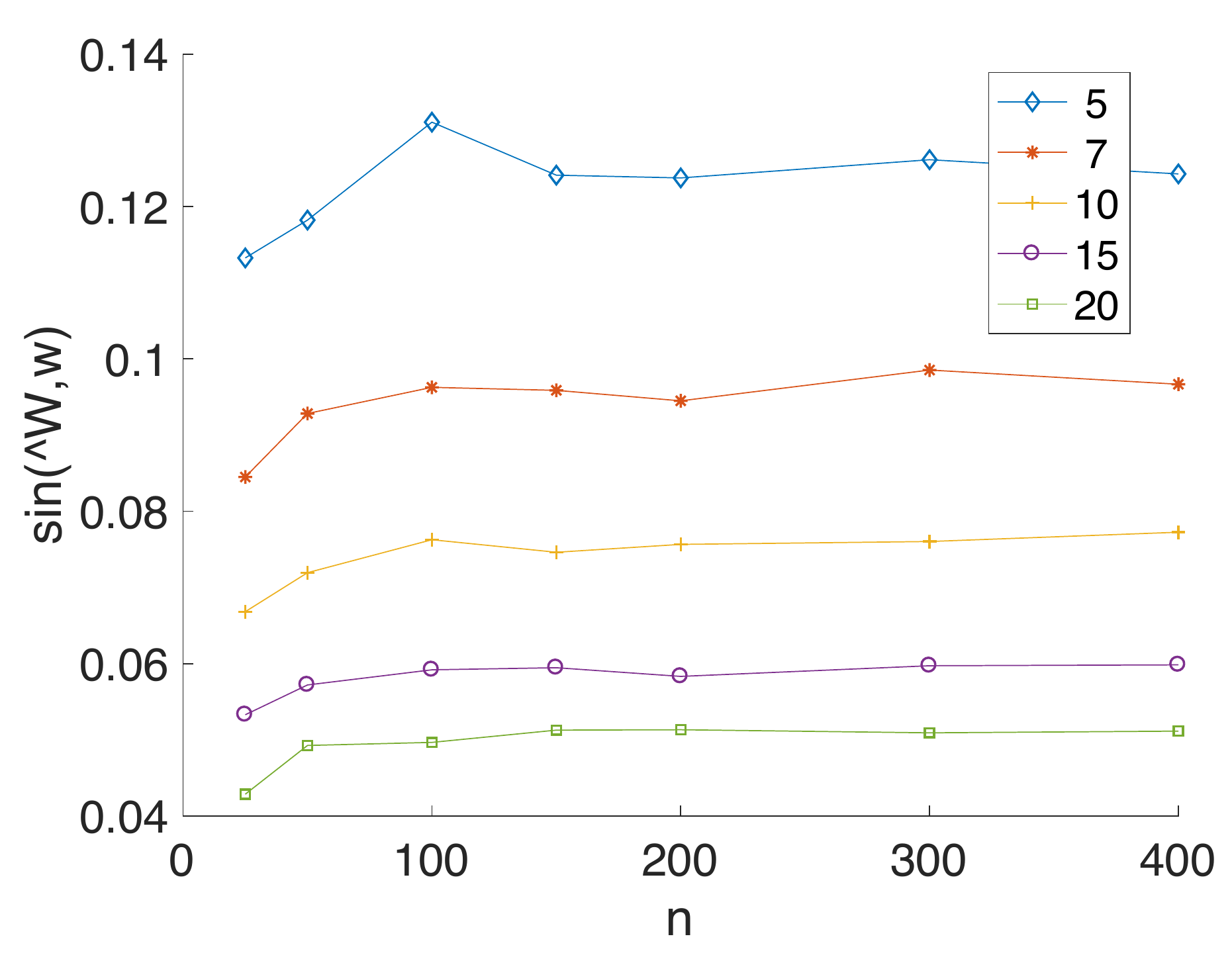}& \hspace{-.2cm} \includegraphics[scale = .22]{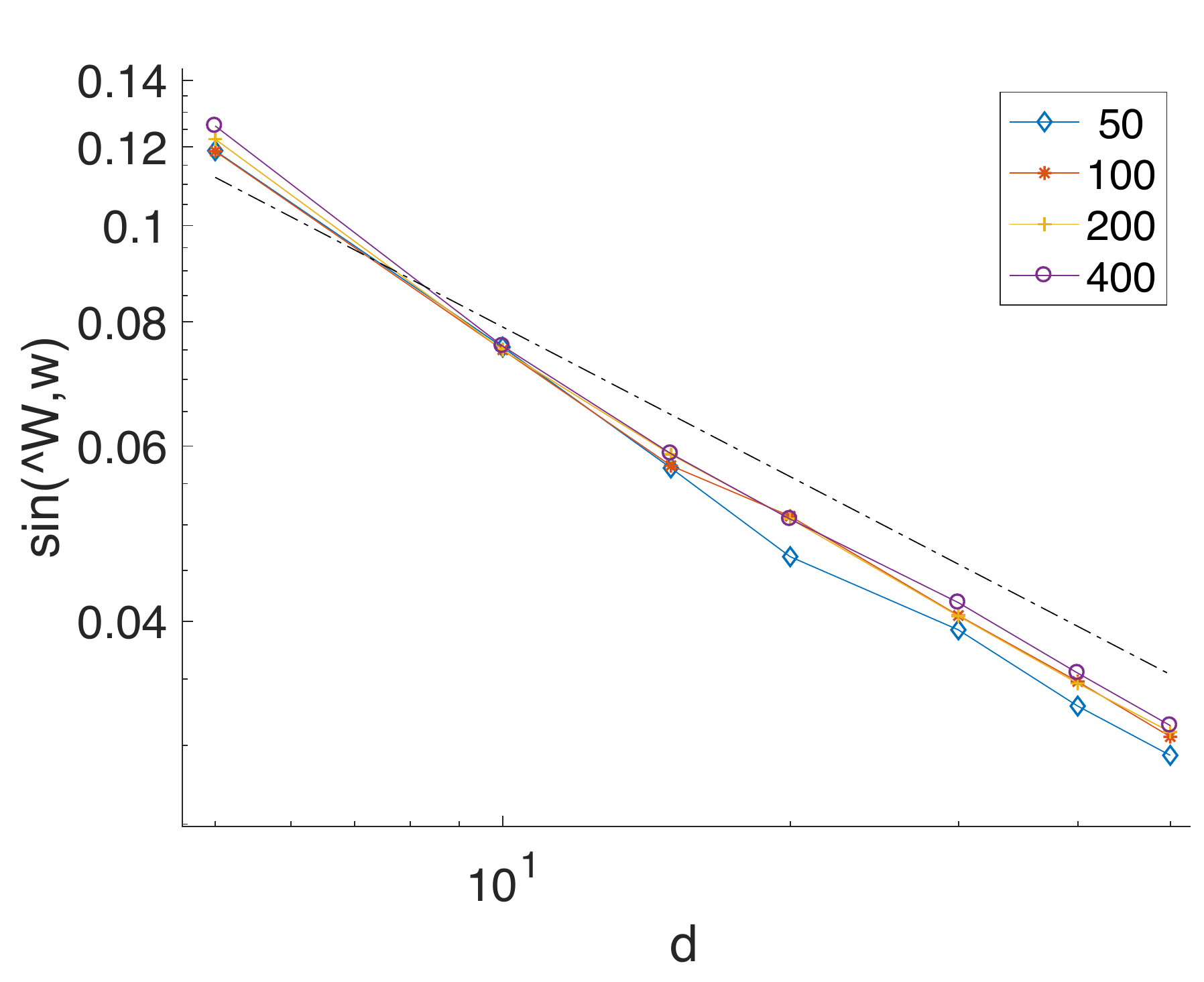}\\
(a) & (b)    
    \end{tabular}    \vspace{-.3cm}
    \caption{Error evolution with the number of nodes $n$ for different expected degrees $d=(n-1)p$ (a), and with the expected degree $(n-1)p$ for different number of nodes $n$ (b). There are $k=100$ comparisons per edge, $b=5$, and results are averaged over $\Ntest=50$ tests. The dashed line in (b) is proportional to $1/\sqrt{k}$.}
    %ref jan_ER_evn_b
    %LSevol_error_nd_special5('jan_ER_evn_b',1,1)
%    graph_type: 'ER'
 %            n: [25 50 100 150 200 300 400]
 %            d: [5 7 10 15 20]
 %            b: 5
 %            k: 100
 %       n_test: 50
 %     d_n_test: 5
%     name_simu: 'jan_ER_evn_b'
    \label{fig:ER_prog_graph}
\end{figure}
\begin{figure}[h!]
    \centering
    \begin{tabular}{cc}
    \hspace{-.5cm}\includegraphics[scale = .22]{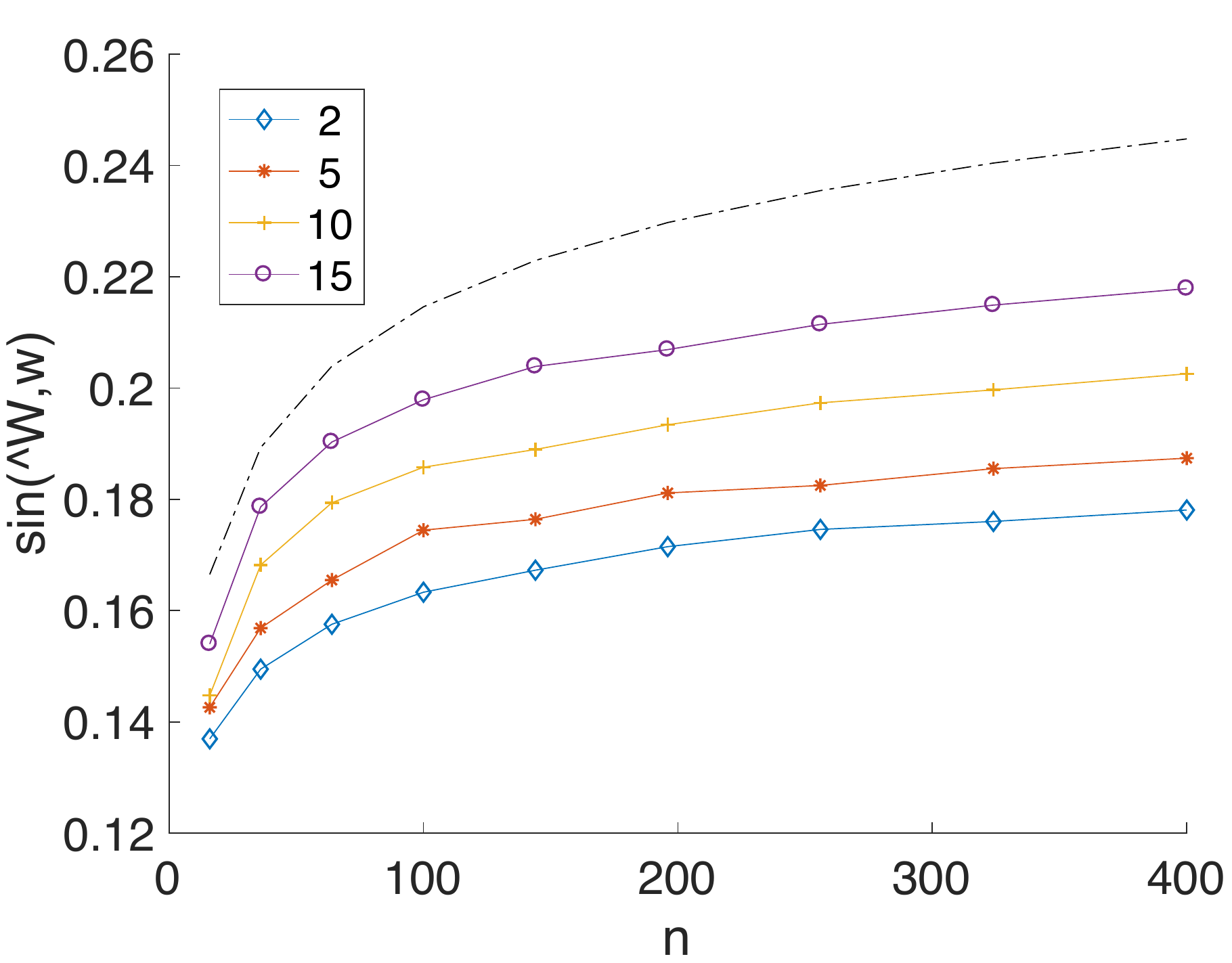}&      
    \hspace{-.2cm}\includegraphics[scale = .22]{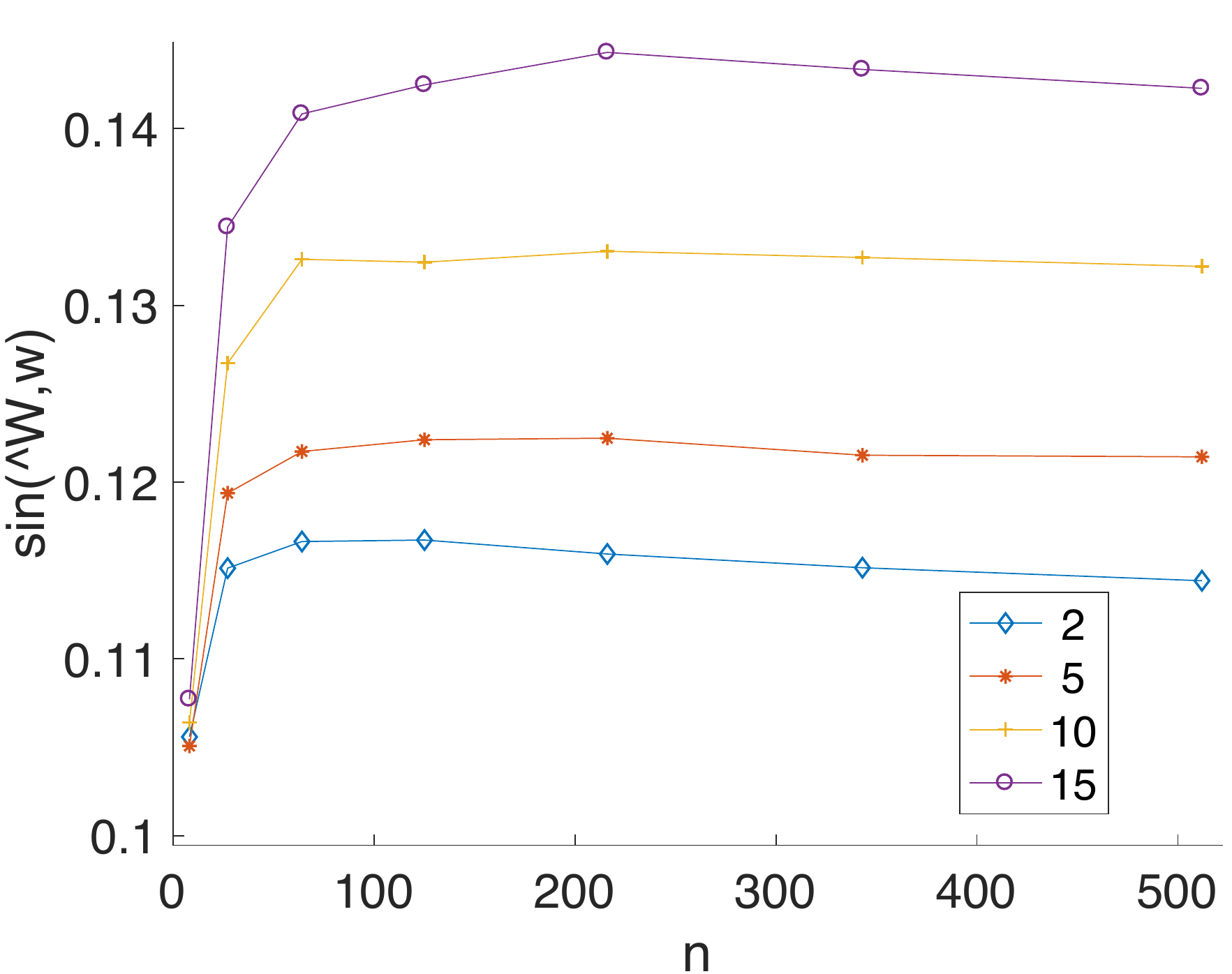}\\(a) &(b)    
    \end{tabular}
    \vspace{-.3cm}
    \caption{Error evolution with the number of nodes for regular lattices in 2D (a), and 3D (b). Each line corresponds to a different choice of $b$. The number of comparisons is $k=100$ per edge. Results are averaged over respectively $\Ntest =1000$ and $\Ntest = 2000$ tests. The dashed line in (a) is proportional to $\sqrt{\log n}$.}
    \label{fig:lattice}
%% square
%d: 1
%    graph_type: 'square'
%             n: [16 36 64 100 144 196 256 324 400]
%             b: [2 5 10 15]
%             k: 100
%        n_test: 1000
%      d_n_test: 50
%     name_simu: 'jan_square_norm_b'
%LSevol_error_nb_special4('jan_square_norm_b',1,1)
%
% cube
%d: 1
%    graph_type: 'cube'
%             n: [8 27 64 125 216 343 512]
%             b: [2 5 10 15]
%             k: 100
%        n_test: 500
%      d_n_test: 100
%     name_simu: 'jan_cube'
\end{figure}

%%%%%%%%%%%%%%%%%%%%%%%%%%%%%%%%%%%%%%%%%%%%%%%%%%%%%%%%%%
%%%%%%%%%%%%%%%%%%%%%%%%%%%%%%%%%%%%%%%%%%%%%%%%%%%%%%%%%%
\section{Conclusion}
%%%%%%%%%%%%%%%%%%%%%%%%%%%%%%%%%%%%%%%%%%%%%%%%%%%%%%%%%%
%%%%%%%%%%%%%%%%%%%%%%%%%%%%%%%%%%%%%%%%%%%%%%%%%%%%%%%%%%
Our main contribution has been to demonstrate, by a combination of upper and lower bounds, that the error in quality estimation from pairwise comparisons scales as the graph resistance. Our work motivates a number of open questions.

First, our upper and lower bounds are not tight with respect to skewness measure $b$. We  conjecture that the scaling of $\widetilde{O}(\sqrt{b \res_{\rm avg}/k})$ for relative error is optimal, but either upper of lower bounds matching this quantity are currently unknown.

Second, it would interesting to obtain non-asymptotic version of the results presented here. 
Our simulations are consistent with the asymptotic scaling $\widetilde{O}(\sqrt{\res_{\rm avg}/k})$ (ignoring the dependence on $b$)
being effective immediately, but at the moment we can only prove this scaling governs the behavior as $k \rightarrow \infty$.

\bibliographystyle{style_files/icml2019}
\bibliography{comp-bib}

\newpage

$~$

\newpage

\appendix

%%%%%%%%%%%%%%%%%%%%%%%%%%%%%%%%%%%%%%%%%%%%%%%%%%
%%%%%%%%%%%%%%%%%%%%%%%%%%%%%%%%%%%%%%%%%%%%%%%%%%
%%%%%%%%%%%%%%%%%%%%%%%%%%%%%%%%%%%%%%%%%%%%%%%%%%
\section{Supplementary Information: relation between different relative error criteria}\label{sec:criteria}
%%%%%%%%%%%%%%%%%%%%%%%%%%%%%%%%%%%%%%%%%%%%%%%%%%
%%%%%%%%%%%%%%%%%%%%%%%%%%%%%%%%%%%%%%%%%%%%%%%%%%

Our relative error criterion of  $|\sin(\hw,w)|$ differs somewhat from the criterion used in \cite{negahban2016rank}, which was
\[ \frac{||\widehat{W} - w||_2}{||w||_2},\] where both $w$ and and $\widehat{W}$ need to be normalized to sum to $1$. To represent this compactly, we introduce the notation $D(x,y)$ for positive vectors $x,y$, defined as 
$$D(x,y)=  \frac{\norm{\frac{y}{||y||_1}-\frac{x}{||x||_1}}_2}{\norm{\frac{y}{||y||_1}}_2},$$
so that the criterion of \cite{negahban2016rank} can be written simply as $D(\widehat{W}, w)$. 

We will show that if $\widehat{W}$ and $w$ satisfy $\max_{i,j} w_i/w_j \leq b$ and $\max_{i,j} \widehat{W}_i/\widehat{W}_j \leq b$, then the two relative error criteria are within a multiplicative factor of $\sqrt{b}$. Thus, ignoring factors depending on the the skewness $b$, we may pass from one to the other at will. 

The proof will require a sequence of lemmas, which we present next. The first lemma provides some inequalities satisfied by the the sine error measure. 

\begin{lemma_ap}\label{lem:sinus}
Let $x,y\in \Re^n$ and denote by $\sin(x,y)$ the sine of the angle made by these vectors. Then we have that
$$
|\sin(x,y)| = \min_\beta \frac{\norm{\beta x- y}_2}{\norm{y}_2}  = \inf_{\alpha\neq 0} \frac{\norm{x-\alpha y}_2}{\norm{\alpha y}_2} 
$$
Moreover, if the angle between $x$ and $y$ is less than $\pi/2$ (which always holds when $x$ and $y$ are nonnegative), we have that \begin{small}
\begin{equation}\label{eq:sin_norm}
\frac{1}{\sqrt{2}}\norm{\frac{x}{\norm{x}_2}-\frac{y}{\norm{y}_2}}_2 
\leq |\sin(x,y)|\leq \norm{\frac{x}{\norm{x}_2}-\frac{y}{\norm{y}_2}}_2. 
\end{equation} \end{small}
\end{lemma_ap}
Moreover, since $\sin(x,y)=\sin(y,x)$ the expressions remain valid if we permute $x$ and $y$.
\begin{proof}
We begin with the first equality. Observe that
$\min_{\beta} \norm{\beta x- y}_2$ is the distance between $y$ and its orthogonal projection on the 1-dimensional subspace spanned by $x$; by definition of  sine, this is also 
$\norm{y}_2 \abs{\sin(x,y)}$, which implies the equality sought.

The second equality directly follows from the change of variable $\alpha = 1/\beta$.  Passing from $\min$ to $\inf$ is necessary  is necessary in case the optimal $\beta$ is 0, which happens when $x$ and $y$ are orthogonal.

Let now $\theta$ be the angle made by $x$ and $y$. An analysis of the triangle defined by 0, 
$x/\norm{x}_2$ and  $y/\norm{y}_2$ shows that $\sin(x,y) = \sin(\frac{\pi-\theta}{2}) \norm{\frac{x}{\norm{x}_2}-\frac{y}{\norm{y}_2}}_2 $, which implies (\ref{eq:sin_norm}) since $\theta\in [0,\frac{\pi}{2}]$.
\end{proof}

We will also need the following lemma on the ratio between the $1$- and $2$- norms of vectors. 
\begin{lemma_ap}\label{lem:ratio_norm}
Let $x\in \Re^n_+$ be such that $\max_{i,j} \frac{x_i}{x_j} \leq b$. Then 
$$
\frac{\norm{x}_{2}}{\norm{x}_1}\leq \min\prt{1,\sqrt{\frac{b}{n}}}.
$$
\end{lemma_ap}
\begin{proof} That $||x||_2 \leq ||x||_1 \cdot 1$ is well-known. To prove the same with $1$ replaced by $\sqrt{\frac{b}{n}}$, we argue as follows. \red{First, without loss of generality, we may assume $x_i \in [1,b]$ for all $i$. Let $Z$ be any random variable supported on the interval $[1,b]$. Observe that \[ E[Z^2] \leq b E[Z] \leq b E[Z]^2,\] where the first inequality follows because $Z \leq b$ and the second inequality follows because $E[Z] \geq 1$. We can rearrange this as 
\[ \frac{E[Z]^2}{E[Z^2]} \geq \frac{1}{b}. \] 
Now let $Z$ be uniform over $x_1, \ldots, x_n$. In this case, this last inequality specializes to 
\[ \frac{\left((1/n) \sum_{i=1}^n x_i \right)^2}{(1/n) \sum_{i=1}^n x_i^2} \geq \frac{1}{b},\] or \[ \frac{||x||_1^2}{||x||_2^2} \geq \frac{n}{b}, \] and now, inverting both sides and taking square roots, we obtain what we need to show.}
\end{proof}

\begin{lemma_ap}
\red{$\abs{\sin(x,y)} \leq  D(x,y)$} holds for nonnegative $x,y\in \Re^n$.
\end{lemma_ap}
\begin{proof}
\red{\begin{eqnarray*} D(x,y) & = &  \frac{\left| \left|\frac{x}{||x||_1} - \frac{y}{||y||_1} \right| \right|_2}{\left| \left|\frac{y}{||y||_1} \right| \right|_2} \\ & = &  \frac{\left| \left|x \frac{||y||_1}{||x||_1} - y \right| \right|_2}{||y||_2} \\ & \geq & \inf_{\beta} \frac{||\beta x - y||_2}{||y||_2} \\
& = & |\sin(x,y)|, 
\end{eqnarray*} where the last step used Lemma \ref{lem:sinus}.}
\end{proof}

\begin{lemma_ap}\label{lem:compare_criteria}
Suppose $x \in \Re^n_+$ and $\max _{i,j} \frac{x_i}{x_j} \leq b$. Then there holds
$$
%D(x,y) \leq \min\prt{\sqrt{n} ,  \frac{(1+\sqrt{b})^2}{\sqrt{b}}}\sqrt{2} \sin(x,y)
D(x,y) \leq \min\prt{1+\sqrt{n} , 1+\sqrt{b}}\sqrt{2} \sin(x,y)
$$
\end{lemma_ap}
\begin{proof}
Without loss of generality, we assume $\norm{x}=\norm{y} = 1$, which means we can simplify $\norm{\frac{x}{\norm{x}_2}-\frac{y}{\norm{y}_2}}_2$ as $\norm{x-y}_2$. Since $\norm{y}_3 = 1$, we have
\begin{eqnarray*}
D(x,y) &=& \frac{\norm{\frac{x}{1^Tx}-\frac{y}{1^Ty} }_2}{\norm{\frac{y}{1^Ty}}_2}\nonumber\\
&\leq& \norm{\frac{1^Ty}{1^Tx}x - y }_2\nonumber\\
&\leq& \norm{x-y}_2 + \norm{x}_2 \abs{\frac{1^Ty}{1^Tx}-1}\nonumber\\
& =& \norm{x-y}_2 + \frac{\norm{x}_2}{1^Tx}  \abs{1^T(y-x)}\nonumber\\
& \leq& \norm{x-y}_2 \prt{1+ \sqrt{n}\frac{\norm{x}_2}{\norm{x}_1} }, \label{eq:temp_bound_dxy}
\end{eqnarray*}
where in the last inequality we have used $$\abs{1^T(y-x)}\leq \norm{y-x}_1 \leq \sqrt{n} \norm{y-x},$$ and 
$||y-x||_2 \leq 1$ due to the positivity of $x$ and $y$. 
Now using Lemma \ref{lem:sinus} to bound $||x-y||_2 \leq \sqrt{2} \sin(x,y)$, we have that the first part of the bound follows then from $\norm{x}_2 \leq \norm{x}_1$, and the second one from Lemma \ref{lem:ratio_norm}.
\end{proof}

%%%%%%%%%%%%%%%%%%%%%%%%%%%%%%%%%%%%%%%%%%%%%%%%%%%%%%%%%%%%%%%%%
%%%%%%%%%%%%%%%%%%%%%%%%%%%%%%%%%%%%%%%%%%%%%%%%%%%%%%%%%%%%%%%%%
%%%%%%%%%%%%%%%%%%%%%%%%%%%%%%%%%%%%%%%%%%%%%%%%%%%%%%%%%%%%%%%%%
\section{Supplementary Information: proof of Theorem \ref{thm:lower_resistance}} \label{sec:proof_lower_res}
%%%%%%%%%%%%%%%%%%%%%%%%%%%%%%%%%%%%%%%%%%%%%%%%%%%%%%%%%%%%%%%%%
%%%%%%%%%%%%%%%%%%%%%%%%%%%%%%%%%%%%%%%%%%%%%%%%%%%%%%%%%%%%%%%%%

Our starting point is a lemma from \cite{slt}, which we will use throughout the lower bound proofs, and which we introduce next. 

Let $d(w,w')$ be a metric on ${\cal W}\times {\cal W}$. Let $P_w(y)$ be an indexed family of probability distributions on the observation space ${\cal Y}$. Let $\widehat{w}(y)$ be an estimator based on observations $y \in {\cal Y}$ and let $\bY$ represent the random vector associated with the observations conditioned on $w$. We use $E_{\bY}[\cdot]$ to denote expectation with respect to the randomness in $\bY$. 

We first lower bound the worst-case error by means of a Bayesian prior. Namely, we observe that if we generate $w$ according to some distribution $\pi$, then using $E_{\pi}[\cdot]$ to denote expectation when $w$ is generated this way, we have
\begin{equation}\label{eq:bayes_prior_bound}
\sup_{w\in {\cal W}}\mathbb{E}_{\bY}[d(w,\widehat{w}(\bY)] \geq \mathbb{E}_{\pi,\bY}[d(w,\widehat{w}(\bY)]]\end{equation}

We will use [\cite{slt} Chap. 13, Corollary 13.2] to obtain a lower bound on \red{(components of)} the latter quantity. 

\begin{lemma_ap} \label{lem:bayeslb}
Let $\pi$ be any prior distribution on ${\cal W}$, and let $\mu$ be any joint probability distribution of a random pair $(w,w') \in {\cal W}\times {\cal W}$, such that the marginal distributions of both $w$ and $w'$ are equal to $\pi$. Then 
\[
\mathbb{E}_{\pi,\bY}[d(w,\widehat{w}(\bY)]]\geq \mathbb{E}_{\mu} [d(w,w')(1-\|P_w - P_{w'}\|_{\mbox{TV}}]    
\]
\noindent where $||\cdot||_{\rm TV}$ represents the total-variation distance between distributions.
\end{lemma_ap}

We will need a slight generalization of the Lemma for our purposes. In particular, we note that it is sufficient that the measure $d(w,w')$ satisfies a weak version of triangle inequality, i.e., $\gamma d(w_1,w_2)\leq d(w_1,\widehat{w})+d(w_2,\widehat{w})$ for some pre-specified constant $\gamma$. Following along the same lines as the proof of Le-Cam's two-point method in [\cite{slt}] we get: \begin{small}
\begin{eqnarray} \label{eq.relaxtriangeq}
\sup_{w\in {\cal W}}\mathbb{E}_w[d(w,\widehat{w})] \geq \gamma \mathbb{E}_{\mu} [d(w,w')(1-\|P_w - P_{w'}\|_{\mbox{TV}}]    
\end{eqnarray} \end{small}

\noindent 
Next, to apply this lemma we need to associate the random variables of interest in our problem with the the measure $P_w$. The random variable $Y_e$ and the corresponding observations $y_e$ are associated with the edge $e \in E$ of our graph. In particular, let $B_e$ be the e$^{th}$ row of $B$. Recall that $BB^T$ is the graph Laplacian. For an edge $e=(ij)$, let $y_e=1$ if $i$ wins over $j$ and $-1$ otherwise.

We now define our distribution $\pi$:
Let $B=\sum_{i=1}^n \sigma_i u_i v_i^T$ be a singular decomposition of $B$. We augment the collection of singular vectors $\sigma_i, v_i, i=1,2,\ldots,d$ with the constant vector $v_0=\frac{1}{\sqrt{n}} {\bf 1}$. We observe that this collection $V=[v_0,v_1 \ldots, v_n]$ forms an orthonormal basis. We overload notation and collect the observations, $y_e,\,e \in E$ into a vector $\mathbf{y}$ and the corresponding random-variable $\bY$. 
We specify define $\pi(w)$ by placing a uniform distribution on the hypercube $\{-1,1\}^{n}$. We then let 
$z=(z_1,\ldots, z_n) \sim \mbox{Unif}\{-1,1\}^n$ and write: 
\begin{equation}
w_z = V\Lambda z = \sqrt{n} v_0 + \delta \sum_{i=1}^n\frac{z_i}{\sigma_i}v_i    
\end{equation}
where, $\delta$ is a suitably small number to be specified later. So, in particular, $\lambda_0 = \sqrt{n}$ and $\lambda_i=\delta/\sigma_i$ for $i=1,2,\ldots, n$. We note that the norm of $w_z$'s defined this way are all equal, i.e., 
\begin{eqnarray}
\|w_z\| &=& \|V\Lambda z \| =  \|\Lambda z \| \nonumber\\ 
 &=& \sqrt{n + \delta^2 \sum_{i=1}^n \frac{1}{\sigma_i^2}}
\label{eq:norm_Lambda_z}
\end{eqnarray}

\red{Our (square) error criterion $\sin^2(\hw,w)$, is lower bounded by 
$$
\frac{1}{2}\rho(w,\widehat{w}):= \frac{1}{2} \left \|\frac{w}{\|w\|}-\frac{\widehat{w}}{\|\widehat{w}\|} \right \|^2 
= \rho(w,\widehat{w}),
$$
see Lemma \ref{lem:sinus}.}

Next, we closely follow the argument in the proof of Assouad's lemma~[\cite{slt}]. To do this we need to express $\rho(w,\widehat{w})$ as a decomposable metric. To this end, let $\widehat{\alpha}(y) = V^{T} \widehat{w}(y)$.
We will suppress dependence on $y$ when it is clear from the context. 
We write:
$$
\min\limits_{\widehat{w}(\bY)} \mathbb{E}_{\pi,\bY}[\rho(w,\widehat{w}(\bY))] = \min\limits_{\widehat{w}} \mathbb{E}_{\pi,\bY} \left \| \frac{w}{\|w\|} - \frac{\widehat{w}}{\|\widehat{w}\|} \right \|^2 
$$\begin{eqnarray}
&=&  \min\limits_{\widehat{w}(\bY)} \mathbb{E}_{\pi,\bY} \left \| V^{T}\left (\frac{w}{\|w\|} - \frac{\widehat{w}}{\|\widehat{w}\|}\right ) \right \|^2 
\nonumber\\
&=& \min\limits_{\widehat{\alpha}(\bY)} \mathbb{E}_{\pi,\bY} \sum_{i=0}^n \left (\frac{\lambda_i z_i}{\|\Lambda z\|} - \frac{\widehat{\alpha_i}}{\|\widehat{\alpha}\|}\right )^2 \nonumber\\
&\geq& \sum_{i=1}^n \min\limits_{\beta_i(\bY)} \mathbb{E}_{\pi,\bY} \left (\frac{\lambda_i z_i}{\|\Lambda z\|} - \beta_i(\bY)\right )^2\nonumber\\
&=&\sum_{i=1}^n \min\limits_{\eta_i(\bY)}\frac{\lambda_i^2}{\|\Lambda z\|^2} \mathbb{E}_{\pi,\bY} \left(z_i - \eta_i(\bY) \right)^2,
\label{eq:decomposition_rho}
\end{eqnarray}
where $\beta_i(\bY),\eta_i(\bY)$ are estimators using the whole vector $\bY$ for each $i$, and the last equality follows from $\|\Lambda z\|$ being constant over the support of $z$.
We are now going to apply the variation (\ref{eq.relaxtriangeq}) of Lemma \ref{lem:bayeslb} to each  $\mathbb{E}_{\pi,\bY} d_i(z,\eta_i(\bY)):=   \mathbb{E}_{\pi,\bY} \left(z_i - \eta_i(\bY) \right)^2$  individually. For this purpose, we define the distribution $\mu_i(z,z')$ by keeping $z$ uniformly distributed in $\{-1,1\}^n$, and flipping the $i^{th}$ bit to obtain $z'$ (formally, $z'_i = -z_i$ and $z'_j = z_j$ for every $j\neq i$). Clearly, $\mathbb{E}_{\pi,\bY} d_i(z,z') = 4$. 
We next work on simplifying the total variation (TV) term in the expression of Lemma~\ref{lem:bayeslb}. First, note that since we have $k$ independent observations per-edge, we tensorize the probability distributions and denote it as $P_w^{\otimes k}$. By the Pinsker's lemma it follows that the total variation distance can be upper-bounded by the the Kullback-Leibler Divergence [\cite{slt}], and furthermore, it follows from standard algebraic manipulations (see [\cite{duchi} Example 3.4]) that,
\begin{eqnarray} \label{eq:logitbound}
\|P_w^{\otimes k} - P_{w'}^{\otimes k}\|_{\mbox{TV}}^2 & \leq& \frac{1}{2} D_{KL}(P_w^{\otimes k}\|P_w'^{\otimes k})\\ \nonumber &\leq& \frac{k}{4}\|B(\log(w)-\log(w'))\|^2.
\end{eqnarray}  Indeed, recall that  the probability of $i$ winning over $j$ is $\frac{w_i}{w_i+w_j} = \frac{1}{1+w_j/w_i}$, and observe that $B_e\log(w) = \log(w_i/w_j)$. Hence we can write
\begin{small}$$P_w(y_e) \triangleq \mbox{Prob}[Y_e=y_e \mid B_e,w] = \frac{1}{1+\exp(-y_eB_e\log(w))}.$$\end{small}Thus $P_w$ and $P_{w'}$ satisfy the ``logistic regression'' distribution, and [\cite{duchi} Example 3.4]) derives Eq. (\ref{eq:logitbound}) for total variation distance between such distributions. 

Now we prove in Section \ref{sec:proof_upperKL} below that for $\delta \sigma_{\max}n\res_{avg}\leq 1$ and \red{$\delta^2 n \Omega_{\rm avg}/2 \leq 1/4$}, we have,
\begin{equation} \label{eq:upperKLlogit}
\|B(\log(w)-\log(w'))\|^2 \leq 16\delta^2.     
\end{equation}
Hence it follows from (\ref{eq.relaxtriangeq}) that for every estimator $\eta_i(\bY)$ and for such $\delta$, 
$$
\mathbb{E}_{\pi,\bY} \left(z_i - \eta_i(\bY)\right)^2 \geq 
\gamma 4(1-\sqrt{4k\delta^2}), 
$$
and then from (\ref{eq:decomposition_rho}) that
\begin{eqnarray*}
\min\limits_{\widehat{w}(\bY)} \mathbb{E}_{\pi,\bY}[\rho(w,\widehat{w}(\bY))] &\geq& \gamma\sum_{i=1}^n
\frac{\lambda_i^2}{\|\Lambda z\|^2}
4(1-\sqrt{4k\delta^2})\\
\\&\geq& \gamma
\sum_{i=1}^n
\frac{4\delta^2(1-\sqrt{4k\delta^2})}{\sigma_i^2n}\\
&=& 2\gamma\delta^2(1-\sqrt{4k\delta^2}) \frac{n-1}{n}\res_{avg},
\end{eqnarray*}
where we have used $\sum_i \frac{1}{\sigma_i^2} = {\rm tr}(L^\dag) = \frac{n-1}{2} \res_{\rm avg}$. The result of Theorem \ref{thm:lower_resistance} follows then from taking $\delta^2 = \frac{1}{16k}$. We need to make sure that the conditions $\delta \sigma_{\max}n\res_{avg}\leq 1$ and \red{$\delta^2 n \Omega_{\rm avg}/2 \leq 1/4$} are satisfied, and for that it suffices to take $k \geq c \sigma_{\rm max} n \res_{\rm avg}$ for some absolute constant $c$. Finally, recall that $\sigma_{\rm max}$ is the largest singularvalue of $B$, and $L=B B^T$, so that $\sigma_{\rm max} = \sqrt{\lambda_{\rm max}(L)}$, so the condition we need can be written as $k \geq c \sqrt{\lambda_{\rm max}(L)} n \res_{\rm avg}$.

%%%%%%%%%%%%%%%%%%%%%%%%%%%%%%%%%%%%%%%%%%%%%%%%%%%%%%%%%%%%%%%%%
%%%%%%%%%%%%%%%%%%%%%%%%%%%%%%%%%%%%%%%%%%%%%%%%%%%%%%%%%%%%%%%%%
\subsection{Proof of Equation (\ref{eq:upperKLlogit}) }\label{sec:proof_upperKL}
%%%%%%%%%%%%%%%%%%%%%%%%%%%%%%%%%%%%%%%%%%%%%%%%%%%%%%%%%%%%%%%%%

In this subsection, we complete the proof by providing a proof of Eq. (\ref{eq:upperKLlogit}). Our starting point is the observation  that, $\log([w_z]_\ell)=\log(1+ \delta \sum_{j=1}^n v_{\ell j}\frac{z_j}{\sigma_j})$. Noting that by Cauchy-Schwartz inequality 
\begin{eqnarray}
\abs{\delta \sum_{j=1}^n v_{\ell j}\frac{z_j}{\sigma_j}} &\leq& 
\sqrt{\delta^2 \prt{\sum_{j=1}^n \frac{1}{\sigma_j^2}}}\nonumber\\
&=& \sqrt{\delta^2 \frac{n-1}{2}\res_{avg}}\nonumber\\
&\leq& \sqrt{\delta^2 n\res_{avg}/2}\label{bound_hz}
\end{eqnarray}
we enforce the constraint that $\delta$ should be sufficiently small so \begin{equation}\label{eq:bound_delta_tech}\delta^2 n \res_{avg}/2 \leq 1/4.\end{equation} 
This constraint enables us to use a Taylor approximation for $\log([w_z]_\ell)-\log([w_{z'}]_\ell)$. 

\red{We use the Taylor's expansion 
\[ f(x) = f(1) + f'(1) (x-1) + \frac{1}{2} f''(\xi) (x-1)^2, \] for the function
$f(x) = \log (x)$. This gives us 
\[ \log x = x - 1 + \frac{1}{2} f''(\xi) (x-1)^2,\] where $\xi$ belongs to the interval between $1$ and $x$. In particular, 
\begin{eqnarray*} \log ([w_z]_l) & = & \log (1 + \delta \sum_{j} \frac{z_j}{\sigma_j} [v_j]_l) \\ 
& = & \delta \sum_{j} \frac{z_j}{\sigma_j} [v_j]_l + C_l \delta^2 (\sum_j \frac{z_j}{\sigma_j} [v_j]_l)^2 ,
\end{eqnarray*} where because of Eq. (\ref{bound_hz}) and our bound on $\delta$, we have that $C_l$ is upper bounded by $(1/2)f''(1/2) = 2$.

Similarly, 
\[ \log ([w_{z'}]_l =  \delta \sum_{j} \frac{z_j'}{\sigma_j} [v_j]_l + C_l \delta^2 (\sum_j \frac{z_j'}{\sigma_j} [v_j]_l)^2,\] where $C_{l'}$ is lower bounded by $(1/2)f''(3/2) = 2/9$.

Observe that, according to our joint distribution over the pair $(w,w')$, we have the bit $i$ flipped, while all others remain the same, namely, $z_j=z'_j$ for $j\neq i$ and $z_i=-z'_i$. Thus \begin{small}
\[  \log ([w_z]_l) -  \log ([w_{z'}]_l =  2 \delta \frac{z_i}{\sigma_i} [v_i]_l+ (C_l - C_{l'}) \delta^2 (\sum_j \frac{z_j'}{\sigma_j} [v_j]_l)^2 \] \end{small}We can write this as 
\[ \log w_z - \log w_{z'} = 2 \delta \frac{z_i}{\sigma_i} v_i + \delta^2 h_z. \]
Recalling that $V$ is the vector that stacks up the vectors $v_i$ as columns, we then have
\begin{eqnarray*} ||h_z||_2 & \leq &  ||h_z||_1 \\ 
& = &  \sum_l (2-2/9) (\sum_{j \neq i,0} \frac{z_j}{\sigma_j} [v_j]_l)^2 \\ 
& \leq &  \sum_l 2 (\sum_{j \neq i,0} \frac{z_j}{\sigma_j} V_{lj})^2 \\ 
& = &   2 (\sum_{j \neq i,0} [V ({\rm diag}(\sigma)^{-1} z )]_j)^2 \\
& \leq & 2 ||{\rm diag}(\sigma)^{-1} z||_2^2 \\ 
& = & 2 \sum_{j=1}^n \frac{1}{\sigma_j^2} \\ 
& = & 2 {\rm tr}(L^\dag) \\ 
& \leq & 2 n \Omega_{\rm avg}. 
\end{eqnarray*} }
This leads us to:
\begin{eqnarray*}
\|B(\log(w_z)-\log(w_{z'}))\|\hspace{-.3cm}&\leq&\hspace{-.3cm} \frac{2\delta}{\sigma_i}\|B v_i\| + \delta^2\|B (h_z-h_{z'})\| \\ &\leq&\hspace{-.3cm} 2\delta + 4\delta^2 \sigma_{\max}n\res_{avg}. 
\end{eqnarray*}

Under the assumption that that $\delta$ is small enough so that 
\[ \delta \sigma_{\rm max} n \Omega_{\rm avg} \leq 1 \] we obtain that 
\[ \|B(\log(w_z)-\log(w_{z'}))\| \leq 4 \delta, \]
which is what we needed to show. 

%%%%%%%%%%%%%%%%%%%%%%%%%%%%%%%%%%%%%%%%%%%%%%%%%%%%%%%%%%%%%%%%%
%%%%%%%%%%%%%%%%%%%%%%%%%%%%%%%%%%%%%%%%%%%%%%%%%%%%%%%%%%%%%%%%%
%%%%%%%%%%%%%%%%%%%%%%%%%%%%%%%%%%%%%%%%%%%%%%%%%%%%%%%%%%%%%%%%%
%%%%%
%%%%%%%%%%%%%%%%%%%%%%%%%%%%%%%%%%%%%%%%%%%%%%%%%%%%%%%%%%%%%%%%%
\section{Supplementary Information: proof of Lemma \ref{highprob}\label{highprobproof}}
%%%%%%%%%%%%%%%%%%%%%%%%%%%%%%%%%%%%%%%%%%%%%%%%%%%%%%%%%%%%%%%%%
%%%%%%%%%%%%%%%%%%%%%%%%%%%%%%%%%%%%%%%%%%%%%%%%%%%%%%%%%%%%%%%%%

We use the following version of Chernoff's inequality: if $Y_l$ are  are independent random variables with zero expectation, variances $\sigma_l^2$, and further satisfying $|Y_l| \leq 1$ almost surely, then \begin{equation} \label{eq:chernoff} P \left( \left| \sum_{l=1}^K Y_l \right|  \geq \lambda \sigma \right) \leq C \max \left( e^{-c \lambda^2}, e^{-c \lambda \sigma} \right), \end{equation}  for some absolute constants $C,c>0$,  where $\sigma^2 = \sum_{i=1}^k \sigma_i^2$ (see Theorem 2.1.3 of \cite{tao2012topics}). Note that when $\lambda \leq \sigma$, this reduces to 
\begin{equation} \label{eq:chernoff2} P \left( \left| \sum_{l=1}^K Y_l \right|  \geq \lambda \sigma \right) \leq C  e^{-c \lambda^2}. \end{equation}

Let $X_{ij}^l$ be the outcome of the $l$'th coin toss comparing nodes $i$ and $j$; that is, $X_{ij}^l$ is an indicator variable equal to one if $i$ wins the toss. We let $Y_l = X_{ij}^l - p_{ij}$.  Then  $Y_l$ are independent random variables, $|Y_l| \leq 1$, and thus we can apply Eq. (\ref{eq:chernoff}). Note that $\sigma_l^2 = 1/v_{ij}$ as shown in (\ref{eq:variance_Xij}). 

We apply Eq. (\ref{eq:chernoff}) with the choice of $\lambda = \sqrt{C_{n,\delta}}$. Choosing $k \geq 4 b C_{n,\delta}$, i.e. $c_2\geq 4$ in view of Assumption \ref{assump:ratio_constant}, and using that $v_{ij} \leq 4b$, it follows that \[ \lambda^2 = C_{n,\delta} \leq \frac{k}{v_{ij}} = \sigma^2, \] so that $\lambda \leq \sigma$. Thus Eq. (\ref{eq:chernoff}) reduced to Eq. (\ref{eq:chernoff2}), which yields 
\[ P \left( \left| kF_{ij} - k p_{ij}  \right| \geq \sqrt{C_{n,\delta}} \sqrt{k/v_{ij}} \right) \leq C e^{ -c C_{n, \delta}} \leq \frac{\delta}{n^2}, \] where this last inequality requires a suitable choice of the constant $c_1$, and we remind that $k F_{ij}$ is the number of successes of $i$ over $j$, and. Applying the union bound over the $|E| \leq n^2$ pairs $i,j$ yields the result.

%%%%%%%%%%%%%%%%%%%%%%%%%%%%%%%%%%%%%%%%%%%%%%%%%%%%%%%%%%%%%%%%%
%%%%%%%%%%%%%%%%%%%%%%%%%%%%%%%%%%%%%%%%%%%%%%%%%%%%%%%%%%%%%%%%%
%%%%%%%%%%%%%%%%%%%%%%%%%%%%%%%%%%%%%%%%%%%%%%%%%%%%%%%%%%%%%%%%%
\section{Supplementary Information on the experiments in Section \ref{sec:expe}}\label{sec:info_expe}
%%%%%%%%%%%%%%%%%%%%%%%%%%%%%%%%%%%%%%%%%%%%%%%%%%%%%%%%%%%%%%%%%
%%%%%%%%%%%%%%%%%%%%%%%%%%%%%%%%%%%%%%%%%%%%%%%%%%%%%%%%%%%%%%%%%

%\subsection*{Information about the experiments}
We first note that we implemented a minor modification of our algorithm: Our estimators (\ref{eq:defLS}) use $\log R_{ij}$, and are thus not defined when the ratio $R_{ij}$ of wins is zero or infinite, i.e. when one agent wins no comparison with one of its neighbors. To avoid this problem, we artificially assign half a win to such agents. Note that these events are typically rare, and their joint probability tends to zero when $k$ grows. Our error analysis can actually be shown to remain valid for our modified algorithm.

Each data point in the curves presented in Section \ref{sec:expe} corresponds to the average error $|\sin(\hw,w)|$ on a number $\Ntest$ of independent trials, chosen sufficiently large so that the curves are stables. The weights $w_i$ were independently randomly generated for each node $i$, with $\log w_i$ following a uniform distribution between 0 and $\log b$. For experiments on Erdos-Renyi graphs, a new graph was created at each trial. Disconnected graphs were discarded, so the results should be understood as conditional to the graph being connected.

%%%%%%%%%%%%%%%%%%%%%%%%%%%%%%%%%%%%%%%%%%%%%%%%%%%%%%%%%%%%%%%%%
%%%%%%%%%%%%%%%%%%%%%%%%%%%%%%%%%%%%%%%%%%%%%%%%%%%%%%%%%%%%%%%%%
%%%%%%%%%%%%%%%%%%%%%%%%%%%%%%%%%%%%%%%%%%%%%%%%%%%%%%%%%%%%%%%%%
%%%%%%%%%%%%%%%%%%%%%%%%%%%%%%%%%%%%%%%%%%%%%%%%%%%%%%%%%%%%%%%%%

\end{document}